\documentclass[11pt]{article}
\usepackage{fullpage,url,natbib}
\usepackage{times}
\usepackage{amsthm,amsfonts,amsmath,amssymb,epsfig,color,float,graphicx,verbatim}
\usepackage{enumitem}
\usepackage{algorithm,algorithmic}
\usepackage{footmisc}
\usepackage{wrapfig}
\usepackage{subcaption}
\usepackage{bbm, bm}
\usepackage{natbib}
\usepackage[export]{adjustbox}
\usepackage{nicefrac}

\usepackage{hyperref}
\hypersetup{
	colorlinks   = true, 
	urlcolor     = blue, 
	linkcolor    = blue, 
	citecolor   = blue 
}

\newtheorem{theorem}{Theorem}[section]

\newtheorem{lemma}[theorem]{Lemma}

\newtheorem{remark}[theorem]{Remark}
\newtheorem{assumption}[theorem]{Assumption}

\renewcommand{\eqref}[1]{Eq.~(\ref{#1})}
\newcommand{\lemref}[1]{Lemma~\ref{#1}}
\newcommand{\thmref}[1]{Theorem~\ref{#1}}

\newcommand{\appref}[1]{Appendix~\ref{#1}}

\newcommand{\assref}[1]{Assumption~\ref{#1}}

\newcommand{\one}[1]{\mathbbm{1}\left\{#1\right\}}

\newcommand{\poly}{\mathrm{poly}}
\newcommand{\Unif}{\mathrm{Unif}}

\newcommand{\reals}{\mathbb{R}}
\newcommand{\R}{\mathbb{R}}

\newcommand{\N}{\mathbb{N}}

\newcommand{\zero}{\boldsymbol{0}}

\newcommand{\abs}[1]{\left| #1 \right|}

\newcommand{\vertiii}[1]{{\left\vert\kern-0.25ex\left\vert\kern-0.25ex\left\vert #1 
    \right\vert\kern-0.25ex\right\vert\kern-0.25ex\right\vert}}

\newcommand{\bigO}{\mathcal{O}}
\newcommand{\bigo}{\mathcal{O}}

\usepackage{xcolor}

\newcommand{\beq}{\begin{eqnarray*}}
\newcommand{\eeq}{\end{eqnarray*}}
\newcommand{\beqn}{\begin{eqnarray}}
\newcommand{\eeqn}{\end{eqnarray}}

\usepackage{ifmtarg}
\usepackage{xifthen}%
\newcommand{\ent}[1][]{%
\ifthenelse{\isempty{#1}}{%
\mathrm{H}
}{
\mathrm{H}^{(#1)}
}}

\newcommand{\loch}[1][]{%
\ifthenelse{\isempty{#1}}{%
\mathrm{h}
}{
\mathrm{h}^{(#1)}
}}

\newcommand{\Dcal}{\mathcal{D}}

\newcommand{\Lcal}{\mathcal{L}}
\newcommand{\Ncal}{\mathcal{N}}

\newcommand{\NN}{\mathbb{N}}

\newcommand{\half}{\frac{1}{2}}



\newcommand{\E}{\mathbb{E}}

\newcommand{\sign}{\mathrm{sign}}

\newcommand{\bx}{\mathbf{x}}

\newcommand{\bz}{\mathbf{z}}

\newcommand{\Ocal}{\mathcal{O}}

\newcommand{\norm}[1]{\left\|#1\right\|}

\makeatletter
\newcommand{\printfnsymbol}[1]{%
  \textsuperscript{\@fnsymbol{#1}}%
}
\makeatother

\title{
Beyond Benign Overfitting in Nadaraya-Watson Interpolators
}

\author{
Daniel Barzilai\thanks{Equal contribution.}
	\qquad
	Guy Kornowski\printfnsymbol{1}
	\qquad
	Ohad Shamir
\vspace{3pt}
\\
Weizmann Institute of Science \vspace{2pt}\\
\texttt{\{daniel.barzilai,guy.kornowski,ohad.shamir\}@weizmann.ac.il}  
}

\begin{document}

\maketitle

\begin{abstract}
In recent years, there has been much interest in understanding the generalization behavior of interpolating predictors, which overfit on noisy training data. Whereas standard analyses are concerned with whether a method is consistent or not, recent observations have shown that even inconsistent predictors can generalize well. In this work, we revisit the classic interpolating Nadaraya-Watson (NW) estimator (also known as Shepard's method), and study its generalization capabilities through this modern viewpoint. In particular, by varying a single bandwidth-like hyperparameter, we prove the existence of multiple overfitting behaviors, ranging non-monotonically from catastrophic, through benign, to tempered. Our results highlight how even classical interpolating methods can exhibit intricate generalization behaviors. In addition, for the purpose of tuning the hyperparameter, the results suggest that over-estimating the intrinsic dimension of the data is less harmful than under-estimating it.
Numerical experiments complement our theory, demonstrating the same phenomena.
\end{abstract}

\section{Introduction}
The incredible success of over-parameterized machine learning models has spurred a substantial body of work, aimed at understanding the generalization behavior of interpolating methods (which perfectly fit the training data).
In particular, according to classical statistical analyses,  interpolating inherently noisy training data can be harmful in terms of test error, due to the bias-variance tradeoff. However, contemporary interpolating methods seem to defy this common wisdom \citep{belkin2019reconciling,zhang2021understanding}. 
Therefore, a current fundamental question in statistical learning is to understand when models that perfectly fit noisy training data 
can still achieve strong generalization performance.

The notion of what it means to generalize
well
has somewhat changed over the years.
Classical analysis has been mostly concerned with whether or not a method is consistent, meaning that asymptotically (as the training set size increases), the
excess risk
converges to zero.
By now, several settings have been identified where even {interpolating} models may be consistent, a phenomenon known as ``benign overfitting'' \citep{bartlett2020benign,liang2020just,frei2022benign,tsigler2023benign}. 
However, following \citet{mallinar2022benign}, a more nuanced view of overfitting has emerged,
based on the observation that
not all
inconsistent learning rules are
necessarily
unsatisfactory.

In particular, it has been argued both empirically and theoretically that in many realistic settings, benign overfitting may not occur, yet interpolating methods may still overfit in a ``tempered'' manner,
meaning that their excess risk is proportional to the
Bayes error.
On the other hand, in some situations overfitting may indeed be ``catastrophic'', leading to substantial degradation in performance even in the presence of very little noise.
The difference between these regimes is significant when the amount of noise in the data is relatively small, and in such a case, models that overfit in a tempered manner may still generalize relatively well, while catastrophic methods do not.
These observations led to several recent works aiming at characterizing which overfitting profiles occur in different settings beyond consistency, mostly for kernel regression and shallow ReLU networks
\citep{manoj2023interpolation,kornowski2024tempered,joshi2024noisy,li2024asymptotic,barzilai2024generalization,medvedev2024overfitting,cheng2024characterizing}. We note that one classical example of tempered overfitting is $1$-nearest neighbor, which asymptotically achieves at most \emph{twice} the Bayes error \citep{cover1967nearest}. Moreover, results of a similar flavor are known for $k$-nearest neighbor where $k>1$ (see \citep{devroye2013probabilistic}). However, unlike the interpolating predictors we study here, $k$-nearest neighbors do not necessarily interpolate the training data when $k>1$. 

With this modern nuanced approach in mind, we revisit in this work
one of the earliest and most classical learning rules, namely the Nadaraya-Watson (NW) estimator \citep{nadaraya1964estimating,watson1964smooth}. 
In line with recent analysis focusing on interpolating predictors, we focus on an interpolating variant of the NW estimator, for binary classification:
given (possibly noisy) classification data $S=(\bx_i,y_i)_{i=1}^{m}\subset\reals^d\times\{\pm1\}$ sampled from some continuous distribution $\Dcal$, and given some $\beta>0$, we consider the predictor
\begin{equation}\label{eq: h_beta}
\hat{h}_\beta(\bx) :=
\begin{cases}
    \sign\left(\sum_{i=1}^m \frac{y_i}{\norm{\bx - \bx_i}^{\beta}}\right) & \text{if~~}\bx \notin S \\
    y_i & \text{if~~}\bx = \bx_i\text{ for some }\bx_i\in S~.
\end{cases}
\end{equation}
The predictor in \eqref{eq: h_beta} has a long history in the literature and is known by many different names, such as Shepard's method, inverse distance weighting (IDW), the Hilbert kernel estimate, and singular kernel classification (see Section~\ref{sec: related} for a full discussion). 

Notably, for \emph{any} choice of $\beta>0$, $\hat{h}_\beta$ interpolates the training set, meaning that $\hat{h}_\beta(\bx_i)=y_i$. 
We will study the predictor's generalization in ``noisy'' classification tasks: we assume there exists a ground truth $f^*:\reals^d\to\{\pm1\}$
(satisfying mild regularity assumptions),
so that for each sampled point $\bx$, its associated label $y\in \{\pm 1\}$ satisfies $\Pr[y=f^*(\bx)\,|\,\bx]=1-p$ for some $p\in(0,0.49)$.
Clearly, for this distribution, no predictor can achieve expected classification error better than $p>0$. However, interpolating predictors achieve $0$ training error on the training set, and thus by definition overfit. We are interested in studying the ability of these predictors to achieve low classification error with respect to the underlying distribution. Factoring out the inevitable error due to noise, we can measure this via the ``clean'' classification error $\Pr_{\bx\sim\Dcal_\bx}[\hat{h}_\beta(\bx)\neq f^*(\bx)]$,
which measures how well $\hat{h}_{\beta}$ captures the ground truth function $f^*$.

As our starting point, we recall that
 $\hat{h}_\beta$ is known to exhibit benign overfitting when $\beta=d$ precisely:

\begin{theorem}[\citet{devroye1998hilbert}]\label{thm:devroye}
Suppose $\Dcal_\bx$ has a density on $\reals^d$, and let $\beta=d$. For any noise level $p\in(0,0.49)$,
it holds that
the clean classification error of $\hat{h}_\beta$ goes to zero as $m\to\infty$, i.e. $\hat{h}_\beta$ exhibits benign overfitting.
\end{theorem}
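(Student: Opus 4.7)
The plan is to reduce the classification claim to pointwise convergence in probability of the underlying Nadaraya--Watson regression estimator, and then exploit the heavy-tailed structure of the singular weights at the critical exponent $\beta=d$.

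Set $w_i:=\|\bx-\bx_i\|^{-d}$, $\eta(\bx):=(1-2p)f^*(\bx)=\E[y\mid\bx]$, and $\hat\eta_\beta(\bx):=(\sum_i y_i w_i)/(\sum_i w_i)$, so that $\hat h_\beta(\bx)=\sign(\hat\eta_\beta(\bx))$ for $\bx\notin S$. Since $p<\half$, $\sign(\eta)\equiv f^*$ and $|\eta|\equiv 1-2p>0$; in particular $|\hat\eta_\beta(\bx)-\eta(\bx)|<1-2p$ forces $\hat h_\beta(\bx)=f^*(\bx)$. Hence, by bounded convergence applied to the classification indicator, it suffices to prove that for almost every fixed $\bx\in\R^d$, $\hat\eta_\beta(\bx)\to\eta(\bx)$ in probability over the training data as $m\to\infty$. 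I then decompose
\[
\hat\eta_\beta(\bx)-\eta(\bx)\;=\;\underbrace{\frac{\sum_i(y_i-\eta(\bx_i))w_i}{\sum_i w_i}}_{A_m}\;+\;\underbrace{\frac{\sum_i(\eta(\bx_i)-\eta(\bx))w_i}{\sum_i w_i}}_{B_m},
\]
and control the \emph{bias} $B_m$ and \emph{variance} $A_m$ separately.

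For the bias $B_m$, at a continuity point $\bx$ of $\eta$ I split the sum at radius $\delta>0$. The near contribution is at most the modulus of continuity $\omega(\delta)$. The far contribution is at most $2\|\eta\|_\infty$ times the fraction of total weight outside $B(\bx,\delta)$. Here the role of $\beta=d$ becomes apparent: the order statistics satisfy $w_{(k)}=\Theta_P(m/k)$ (since the $k$-th nearest neighbor to $\bx$ lies roughly at distance $(k/m)^{1/d}$), hence $\sum_i w_i=\Theta_P(m\log m)$, whereas the total far weight is deterministically bounded by $m\delta^{-d}$. Thus the far-weight fraction is $O_P(1/(\delta^d\log m))\to 0$, giving $\limsup_m|B_m|\le\omega(\delta)$, which vanishes on letting $\delta\to 0$. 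For the variance $A_m$, I condition on $(\bx_i)_{i=1}^m$: the numerator becomes a sum of independent mean-zero variables each bounded by $2w_i$, with conditional variance at most $4\sum_i w_i^2$. Chebyshev yields $\Pr(|A_m|>\epsilon\mid(\bx_i))\le 4\epsilon^{-2}\sum_i w_i^2/(\sum_i w_i)^2$, and the same order-statistics calibration shows $\sum_i w_i^2=\Theta_P(m^2)$ (dominated by the top few terms), so the conditional bound is $O_P(1/\log^2 m)\to 0$.

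The main obstacle is the variance step: because $\E[w_i]=\infty$ when $\beta=d$, no direct law of large numbers controls $\sum_i w_i$, and one must carefully calibrate the order statistics to establish that $\sum_i w_i^2\ll(\sum_i w_i)^2$ asymptotically. This tail computation, combined with standard Fubini/measurability bookkeeping to transport pointwise convergence into an integrated bound on the clean classification error, constitutes the technical heart of the argument.
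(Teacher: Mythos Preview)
The paper does not give its own proof of this statement: the theorem is quoted from \citet{devroye1998hilbert} as a known result and serves only as the starting point motivating the paper's analysis of the inconsistent regimes $\beta\neq d$. There is therefore nothing in the paper to compare your proposal against.

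For what it is worth, your approach is sound and is essentially the classical argument. The bias--variance decomposition of $\hat\eta_\beta-\eta$, combined with the order-statistic calibration $w_{(k)}=\Theta_P(m/k)$ at the critical exponent (so that $\sum_i w_i=\Theta_P(m\log m)$ while $\sum_i w_i^2=\Theta_P(m^2)$), is precisely how consistency of the Hilbert kernel is established. Two minor comments in the context of this paper's setup: (i) here $\eta=(1-2p)f^*$ is $\{\pm(1-2p)\}$-valued, so under Assumption~\ref{ass: f*} the ``near'' bias contribution is in fact exactly zero for almost every $\bx$ once $\delta$ is small enough, not merely $\omega(\delta)$; and (ii) the order-statistic estimates you invoke can be made rigorous via the same exponential-representation machinery the paper develops for $\beta\neq d$ (Lemmas~\ref{lem:order_stat} and~\ref{lem:cdf_approx}).
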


In other words, although training labels are flipped with probability $p\in(0,0.49)$, the predictor is asymptotically consistent, and thus predicts according to the ground truth $f^*$.
Furthermore, \citet{devroye1998hilbert} also informally argued that setting $\beta\neq d$ is inconsistent in general, and therefore excess risk should be expected.
Nonetheless, the behavior of the predictor $\hat{h}_\beta$ beyond the benign/consistent setting is not known prior to this work.

In this paper,
in light of the recent interest in inconsistent interpolation methods, we characterize the price of overfitting in the inconsistent regime $\beta\neq d$. What is the nature of the inconsistency for $\beta\neq d$? Is the overfitting tempered, or in fact catastrophic?
As our main contribution, we answer these questions and prove the following asymmetric behavior:
\begin{theorem}[Main results, informal]
For any dimension $d\in\NN$ and noise level $p\in(0,0.49)$, the following hold asymptotically as $m\to\infty$:
\begin{itemize}
    \item (``Tempered'' overfitting) For any $\beta>d$,
    the clean classification error of $\hat{h}_\beta$ is between $\Omega(\poly(p))$ and $\widetilde{\Ocal}(p)$.
    \item (``Catastrophic'' overfitting) For any $\beta<d$, there is some $f^*$ for which $\hat{h}_\beta$ will suffer constant clean classification error, independently of $p$.
\end{itemize}
\end{theorem}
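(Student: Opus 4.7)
My plan is to treat the two regimes separately via a common decomposition. Writing $w_i(\bx)=\|\bx-\bx_i\|^{-\beta}$ and $\pi_i(\bx)=w_i(\bx)/\sum_j w_j(\bx)$, the NW sum becomes
\[
S(\bx)\;=\;\sum_i y_i\,w_i(\bx)\;=\;\sum_i (1-2\xi_i)\,f^*(\bx_i)\,w_i(\bx),
\]
where $\xi_i\sim\mathrm{Bernoulli}(p)$ are independent label-flip indicators. The question of whether $\sign S(\bx)=f^*(\bx)$ reduces to controlling how much of the weight mass $\pi$ sits on training points where either $f^*(\bx_i)\ne f^*(\bx)$ or noise has flipped the label.

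For the tempered regime $\beta>d$, the key structural fact is that the weights concentrate on finitely many nearest neighbors. Using the standard order-statistic scaling $d_{(k)}(\bx)\asymp (k/m)^{1/d}$, one has $w_{(k)}(\bx)\asymp (m/k)^{\beta/d}$, and since $\beta/d>1$ the series $\sum_k k^{-\beta/d}$ converges; this translates to $\sum_i \pi_i(\bx)^2 = \Theta(1)$. I would first discard the test points lying within $O(m^{-1/d}\log m)$ of the decision boundary of $f^*$, a set of measure $o(1)$ under the paper's regularity assumption on $f^*$. Off this bad set, every training point with non-negligible weight satisfies $f^*(\bx_i)=f^*(\bx)$, so the error event reduces to $\{\sum_i \xi_i \pi_i\ge \tfrac12\}$, which by Chebyshev has probability at most $\tfrac{4p(1-p)\sum_i\pi_i^2}{(1-2p)^2}=O(p)$. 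Uniformizing the order-statistic scaling over test points contributes at most a $\log$ factor, yielding the $\widetilde O(p)$ bound. The matching $\Omega(\poly(p))$ lower bound follows because each flipped training label forces $\hat h_\beta$ to be wrong on a neighborhood of volume $\Omega(1/m)$ around that point (a Voronoi-type argument relying on the same weight-concentration fact), and with $\Omega(pm)$ flipped points in expectation this yields $\Omega(p)$ clean error.

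For the catastrophic regime $\beta<d$, I would construct $f^*$ adversarially: take $\Dcal_{\bx}=\Unif(B(0,1))$ on $\reals^d$ and $f^*(\bx)=+1$ on $B(0,r)$, $-1$ outside, with $r>0$ a small but fixed constant to be chosen. For $\bx\in B(0,r/2)$,
\[
\E\!\left[\tfrac{S(\bx)}{m}\right]\;=\;(1-2p)\,\bigl(I_+(\bx)-I_-(\bx)\bigr),\qquad I_{\pm}(\bx):=\!\!\int_{f^*=\pm1}\!\!\|\bx-\by\|^{-\beta}\,d\by.
\]
Since $\beta<d$, a polar-coordinates calculation gives $I_+(\bx)=\Theta(r^{d-\beta})$ (the integral converges at $\by=\bx$ precisely because $\beta<d$), while $I_-(\bx)=\Theta(1)$. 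Thus for $r$ small enough, $I_-(\bx)>I_+(\bx)$ uniformly on $B(0,r/2)$. A concentration argument---splitting the training set into a ``close'' subset (few points, very large weights, controlled via a tail bound on the count of training points in $B(\bx,\eps)$) and a ``far'' subset (many points with bounded weights, handled by Bernstein)---then gives $\sign S(\bx)=-1$ with probability tending to $1$ for a.e.~$\bx\in B(0,r/2)$, producing constant clean classification error independently of $p$.

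The main obstacle I anticipate is sharp two-sided control of the singular sum $\sum_i\|\bx-\bx_i\|^{-\beta}$ uniformly in $\bx$. In the tempered case this is required to push the effective-number-of-neighbors bound from expectation to high probability, in particular ruling out the freak configurations in which a single training point lands exceptionally close to $\bx$ and makes $\sum_i\pi_i^2$ degenerate. In the catastrophic case, it is required to justify that the empirical weighted sum tracks its integral expectation despite the singularity at $\bx$, and is precisely what forces the more delicate Bernstein-type decomposition sketched above rather than plain Chebyshev on the raw sum.
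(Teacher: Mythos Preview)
Your route differs from the paper's. For the upper bound, the paper does not argue via $\sum_i\pi_i^2$ and Chebyshev; it represents the ordered distances through i.i.d.\ exponentials so that $\|\bx-\bx_{(i)}\|^{-\beta}$ behaves like $(\sum_{j\le i}E_j)^{-\beta/d}$, truncates to the $k$ nearest neighbors (using $\sum_i i^{-\beta/d}<\infty$), and reduces to the event that at least one of the first $k$ labels is flipped; optimizing $k$ is what produces the $\log^{1/(1-d/\beta)}(1/p)$ factor. Your second-moment route is valid and, since $\sum_i\pi_i^2\le 1$ holds deterministically, yields $O\!\bigl(p/(1-2p)^2\bigr)$ once the weight mass on points with $f^*(\bx_i)\ne f^*(\bx)$ is shown to vanish. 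For the lower bound, the paper proves only $\Omega(p^{c(\beta/d)})$ via the symmetric truncation argument (all $k$ nearest labels flipped). Your Voronoi heuristic aims for the stronger $\Omega(p)$, but the claim that each flipped point forces error on a region of volume $\Omega(1/m)$ is not correct as stated: the region $\{\bx:\pi_i(\bx)>1/2\}$ has \emph{random} volume and can be arbitrarily small when another training point lands near $\bx_i$. The repair is to switch viewpoint via Fubini: for a test point $\bx$, the event $\{\pi_{(1)}(\bx)>1/2\}$ has asymptotic probability bounded below by a positive constant depending only on $\beta/d$ (this follows from the same exponential representation), and on that event a flip of the nearest label causes an error, giving $\Omega_{\beta/d}(p)$.

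\textbf{Catastrophic regime.} Your construction (uniform density on the unit ball, minority class $B(0,r)$) is a simplification of the paper's, which places mass $c$ on an inner ball and $1-c$ on an outer annulus; the integral comparison $I_-(\bx)=\Theta(1)\gg I_+(\bx)=\Theta(r^{d-\beta})$ is correct. The gap is the close-subset step. A tail bound on the \emph{count} in $B(\bx,\eps)$ does not control $\sum_{i:\|\bx_i-\bx\|<\eps}\|\bx-\bx_i\|^{-\beta}$, because the individual summands are unbounded; and a direct Bernstein on the full sum fails for $\beta\ge d/2$ since $\E[\|\bx-\bx_i\|^{-2\beta}]=\infty$. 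The paper resolves this with the exponential order-statistic machinery: writing $U_{(i)}=\sum_{j\le i}E_j/\sum_{j\le m+1}E_j$, it proves $\sum_{i=1}^{k}U_{(i)}^{-\beta/d}\lesssim m^{\beta/d}k^{1-\beta/d}/(1-\beta/d)$ with high probability, which turns into $\sum_{\text{inner}}\|\bx-\bx_i\|^{-\beta}\lesssim cm$ and is then beaten by the outer contribution of order $m$. An alternative compatible with your sketch is to take $\eps=\eps_m=o(m^{-1/d})$ so that the close subset is empty with high probability, and then apply Bernstein to the truncated terms (the truncated second moment is $\Theta(\eps_m^{d-2\beta})$ when $\beta>d/2$, still giving $o(m)$ deviation); but this must be spelled out, since ``few points, controlled via a count bound'' alone does not suffice.
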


We summarize the overfitting profile that unfolds in Figure~\ref{fig:overall_illustration}, with an illustration of the Nadaraya-Watson interpolator in one dimension. These results provide a modern analysis of a classical learning rule, uncovering a range of generalization behaviors: By varying a single hyperparameter, these behaviors range non-monotonically from catastrophic to tempered overfitting, with a delicate sliver of benign overfitting behavior in between. Our results highlight how intricate generalization behaviors, including the full range from benign to catastrophic overfitting, can appear in simple and well-known interpolating learning rules. To the best of our knowledge, for kernel interpolators, there is no other example of a single kernel provably exhibiting all three types of overfitting as we do here (even with a varying bandwidth).

Moreover, the results provide an interesting insight about the optimal tuning of $\beta$: Although \thmref{thm:devroye} might seem to suggest that the optimal value of $\beta$ is simply the input dimension $d$, it does not cover the common case where the data has some intrinsic dimension $d_{\mathrm{int}}<d$ (due to the requirement that $\Dcal_{\bx}$ has a density on $\reals^d$). In that situation, our analysis suggests that the optimal value for $\beta$ is in fact $d_{\mathrm{int}}$, not $d$. Unfortunately, $d_{\mathrm{int}}$ is generally not known, and can only be estimated. In that case, 
our results suggest that setting $\beta$ to an \emph{over-estimate} of $d_{\mathrm{int}}$ (namely, choosing some $\beta>d_{\mathrm{int}}$) is much preferable to under-estimating it, as the former leads to tempered overfitting, whereas the latter may lead to catastrophic overfitting. 
We further discuss this in Remark~\ref{remark: intrinsic dim}, and in Section~\ref{sec: experiment} we present numerical evidence supporting this claim.

\begin{figure}[h]
    \centering
        \includegraphics[trim=0cm 16cm 0cm 16.5cm,clip=true, width=1.5\linewidth,center]{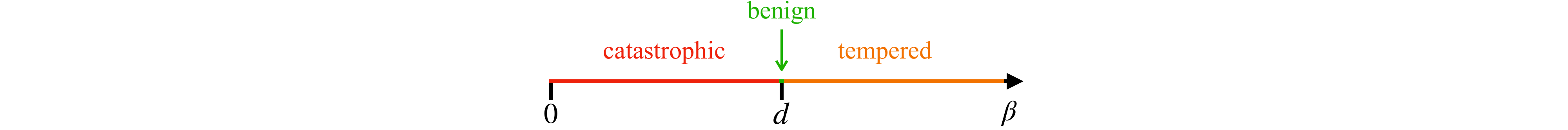}
        \caption*{(a)}
    \includegraphics[width=0.85\textwidth]{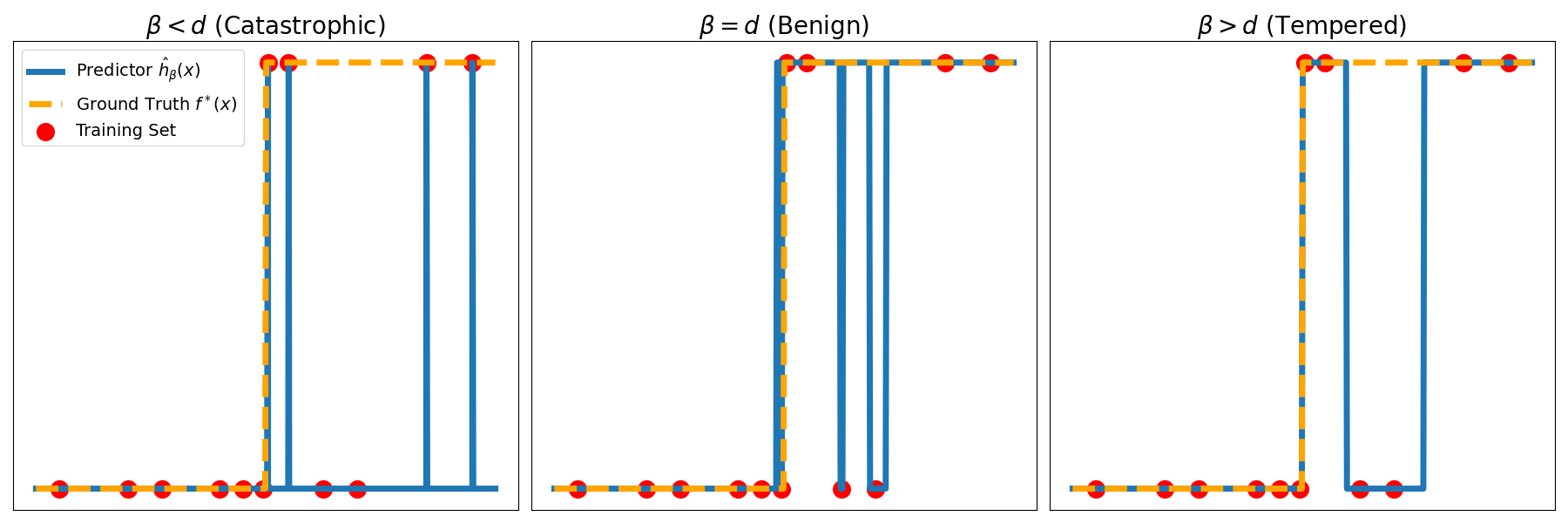}
    \caption*{(b)}
    \caption{
    (a): Illustration of the entire overfitting profile of the NW interpolator given by \eqref{eq: h_beta}.
    (b): Toy illustration of the NW interpolator in dimension $d=1$ with noisy data.
    \textbf{(Left)} Catastrophic overfitting for $\beta < d$: the prediction at each point is influenced too heavily by far-away points, and therefore the predictor does not capture the general structure of the ground truth function $f^*$. \textbf{(Middle)} Benign overfitting for $\beta=d$: asymptotically the excess risk will be Bayes-optimal. \textbf{(Right)} Tempered overfitting for $\beta > d$, the prediction at each point is influenced too heavily by nearby points, so 
    the predictor misclassifies large regions around label-flipped points, but only around them.
    }
    \label{fig:overall_illustration}
\end{figure}

The paper is structured as follows. In Section~\ref{sec: related}, we review related work. In Section~\ref{sec: prelim} we formally present the discussed setting. In Section~\ref{sec: tempered}, we present our result for the tempered regime $\beta>d$. In Section~\ref{sec: catastrophic} we present our result for the catastrophic regime $\beta<d$.
In Section~\ref{sec: experiment} we provide some illustrative experiments to complement our theoretical findings. We conclude in Section~\ref{sec: discuss}.
All of the results in the main text include proof sketches, while full proofs appear in the appendix.

\section{Related work} \label{sec: related}
\paragraph{Nadaraya-Watson kernel estimator. } The Nadaraya-Watson (NW) estimator was introduced independently in the seminal works of \citet{nadaraya1964estimating} and \citet{watson1964smooth}. Later, and again independently, in the context of reconstructing smooth surfaces, \citet{shepard1968two} used a method referred to as Inverse Distance Weighting (IDW), which is in fact a NW estimator with respect to certain kernels leading to interpolation, identical to those we consider in this work. To the best of our knowledge, \citet{devroye1998hilbert} provided the first statistical guarantees for such interpolating NW estimators (which they called the Hilbert kernel), showing that the predictor given by \eqref{eq: h_beta} with $\beta=d$ is asymptotically consistent.
For a more general discussion on so called ``kernel rules'', see \citep[Chapter 10]{devroye2013probabilistic}.
In more recent works, \citet{belkin2019does} derived non-asymptotic rates showing consistency under a slight variation of the kernel. \citet{radhakrishnan2023wide, eilers2024generalized} showed that in certain cases, neural networks in the NTK regime behave approximately as the NW estimator, and leverage this to show consistency. \citet{abedsoltan2024context} showed that interpolating NW estimators can be used in a way that enables in-context learning.

\paragraph{Overfitting and generalization.}

There is a substantial body of work aimed at analyzing the generalization properties of interpolating predictors that overfit noisy training data.
Many works study settings in which interpolating predictors exhibit benign overfitting,
such as linear predictors \citep{bartlett2020benign,belkin2020two,negrea2020defense,koehler2021uniform,hastie2022surprises,zhou2023optimistic,shamir2023implicit},
kernel methods \citep{yang2021exact,mei2022generalizationrandomfeat,tsigler2023benign}, and other learning rules \citep{devroye1998hilbert,belkin2018overfitting,belkin2019reconciling}.

On the other hand, there is also a notable line of work studying the limitations of generalization bounds in interpolating regimes \citep{belkin2018understand,zhang2021understanding,nagarajan2019uniform}.
In particular, several works showed that various kernel interpolating methods are not consistent in any fixed dimension \citep{rakhlin2019consistency,beaglehole2023inconsistency,haas2024mind}, 
or whenever the number of samples scales
as an integer-degree polynomial with the dimension
\citep{mei2022generalization, xiao2022precise, barzilai2024generalization, zhang2024phase}.

Motivated by these results and by additional empirical evidence, \citet{mallinar2022benign} proposed a more nuanced view of interpolating predictors,
coining the term \emph{tempered overfitting} to refer to settings in which the asymptotic risk is strictly worse than optimal, but is still better than a random guess. 
A well-known example is the classic $1$-nearest-neighbor interpolating method, for which the excess risk scales linearly with the probability of a label flip \citep{cover1967nearest}.
Several works subsequently studied settings in which tempered overfitting occurs in the context of kernel methods
\citep{li2024asymptotic,barzilai2024generalization,cheng2024comprehensive},
and for other interpolation rules
\citep{manoj2023interpolation,kornowski2024tempered,harel2024provable}.

Finally, some works studied settings in which interpolating with kernels is in fact \emph{catastrophic},
meaning that the excess error is lower bounded by a constant which is independent of the noise level, leading to substantial risk even in the presence of very little noise \citep{kornowski2024tempered,joshi2024noisy,medvedev2024overfitting, cheng2024characterizing}.

We note that our proof techniques differ from most known results for kernel interpolators, which typically rely on a spectral analysis. However, this often requires additional non-trivial assumptions (e.g. Gaussian universality). By contrast, our proofs are based on characterizing the ``locality'' of the predictor.

\paragraph{Varying kernel bandwidth.}
Several
works considered generalization bounds that hold uniformly over a family of kernels, parameterized by a
bandwidth
parameter
\citep{rakhlin2019consistency, buchholz2022kernel, beaglehole2023inconsistency, haas2024mind, medvedev2024overfitting}. The bandwidth plays the same role as the parameter $\beta$ in this paper,
controlling
how local/global the kernel is. Specifically, these works showed that in fixed dimensions various kernels are asymptotically inconsistent \emph{for all} bandwidths. \citet{medvedev2024overfitting} showed that
with large enough noise, the Gaussian kernel with any bandwidth is at least as bad as a constant predictor,
which we classify as catastrophic. As far as we know, our paper provides the first known example of a kernel method provably exhibiting all types of overfitting behaviors in fixed dimensions by varying the bandwidth alone.

\section{Preliminaries} \label{sec: prelim}

\paragraph{Notation.} 
We use bold-faced font to denote vectors, e.g. $\bx\in\reals^d$, and denote by $\norm{\bx}$ the Euclidean norm. We let $[n]:=\{1,\dots,n\}$. Given some set $A\subseteq \R^d$ and a function $f$, we denote its restriction by $f|_A:A\to\reals$,
and by $\Unif(A)$ the uniform distribution over $A$. We let $B(\bx, r):=\{\bz\mid \norm{\bx-\bz}\leq r\}$ be the ball of radius $r$ centered at $\bx$.
We denote by $\overset{d}{=}$ equality in distribution.
We use the standard big-O notation, with $\bigo(\cdot)$, $\Theta(\cdot)$ and $\Omega(\cdot)$ hiding absolute constants that do not depend on problem parameters, and $\tilde{\bigo}(\cdot)$, $\tilde{\Omega}(\cdot)$
additionally hiding
logarithmic factors.
Given some parameter (or set of parameters) $\theta$, we denote by $c(\theta),C(\theta),C_1(\theta),\widetilde{C}(\theta)$ etc. positive constants that depend on $\theta$.

\paragraph{Setting.}
Given some target function $f^*:\reals^d\to\{\pm1\}$, we consider a classification task based on noisy training data $S=(\bx_i,y_i)_{i=1}^{m}\subset\reals^d\times\{\pm1\}$, such that $\bx_1,\dots,\bx_m \sim\Dcal_\bx$ are sampled from some distribution $\Dcal_\bx$ with a density $\mu$, and for each $i\in[m]$ independently, $y_i=f^*(\bx_i)$ with probability $1-p$ or else $y_i=-f^*(\bx_i)$ with probability $p\in(0, 0.49)$. We note that while we focus on a fixed noise level $p$ for simplicity, our results can also be extended to the case where $p$ varies smoothly with $\bx$.

Given the predictor $\hat{h}_\beta$ introduced in \eqref{eq: h_beta},
we denote the asymptotic clean classification error by\footnote{Technically, the limit may not exist in general. In that case, our lower bounds hold for the $\lim\inf_{m\to\infty}$,
while our upper bounds hold for the $\lim\sup_{m\to\infty}$,
and therefore both hold for all partial limits.} 
\[
\Lcal(\hat{h}_\beta)
=\lim_{m\to\infty}\E_S\Big[{\Pr}_{\bx\sim\Dcal_\bx}[\hat{h}_\beta(\bx)\neq f^*(\bx)]\Big]~.
\]

Throughout the paper we impose the following mild regularity assumptions on 
$\mu$ and $f^*:$

\begin{assumption} \label{ass: f*}
We assume $\mu$ is continuous at almost every $\bx\in\reals^d$.
We also assume that for almost every $\bx\in\reals^d$, there is a neighborhood $B_\bx\supset\{\bx\}$ such that $f^*|_{B_\bx}\equiv f^*(\bx)$.
\end{assumption}

We note that the assumptions above are very mild. Indeed, any density is Lebesgue integrable, whereas our assumption for $\mu$ is equivalent to it being Riemann integrable.
As for $f^*$, the assumption
asserts that its associated decision boundary has zero measure, ruling out pathological functions.

\paragraph{Types of overfitting.}

We study the asymptotic error guaranteed by $\hat{h}_\beta$ in a ``minimax'' sense, namely uniformly over $\mu,f^*$ that satisfy Assumption~\ref{ass: f*}.
Under the described setting with noise level $p\in(0, 0.49)$, we say that:

\begin{itemize}
    \item $\hat{h}_\beta$ exhibits benign overfitting if $\Lcal(\hat{h}_\beta)=0$;
    \item Else, $\hat{h}_\beta$ exhibits tempered overfitting if $\Lcal(\hat{h}_\beta)$ scales monotonically with $p$:
    there exists $\varphi:[0,1]\to[0,1]$ non-decreasing, continuous with $\varphi(0)=0$, so that $\Lcal(\hat{h}_\beta)\leq\varphi(p)$;
    \item $\hat{h}_\beta$ exhibits catastrophic overfitting if there exist some $\mu,f^*$ (satisfying the regularity assumptions) such that $\Lcal(\hat{h}_\beta)$ is lower bounded by a positive constant (independent of $p$).
\end{itemize}

We remark that
the latter definition of catastrophic overfitting slightly differs from the one of \citet{mallinar2022benign}, which called the method catastrophic only if $\Lcal(\hat{h}_\beta)= \half$.
\citet{medvedev2024overfitting} noted that the latter definition
can result in even the most trivial predictor, a function that is constant outside the training set, being classified as tempered instead of catastrophic. We therefore find the formalization above more suitable, which also coincides with previous works \citep{manoj2023interpolation,kornowski2024tempered,barzilai2024generalization,medvedev2024overfitting,harel2024provable}.

\section{Tempered overfitting} \label{sec: tempered}

We start by presenting our main result for the $\beta>d$ parameter regime, establishing tempered overfitting of the predictor $\hat{h}_\beta:$

\begin{theorem} \label{thm: tempered}
For any $d\in\NN$, any $\beta>d$, any density $\mu$ and target function $f^*$ satisfying Assumption~\ref{ass: f*}, and any noise level $p\in (0, 0.49)$, 
    it holds that
    \[
    C_1(\beta/d)\cdot p^{c(\beta/d)}~\leq~
    \Lcal(\hat{h}_\beta)
    ~\leq~ C_2(\beta/d)\cdot\log^{\frac{1}{1-d/\beta}}(1/p)\cdot p~,
    \]
    where $c(\beta/d)=\left(\frac{8\cdot 2^{\beta/d}}{\beta/d-1}\right)^{\frac{1}{\beta/d-1}}>0$, and $C_1(\beta/d),C_2(\beta/d)>0$ are constants that depend only on the ratio $\beta/d$.
\end{theorem}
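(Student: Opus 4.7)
The plan is to prove the upper and lower bounds separately. Both analyze the weighted sum $S(\bx) := \sum_i y_i / \|\bx - \bx_i\|^\beta$ whose sign defines $\hat{h}_\beta(\bx)$, at test points where $f^*$ is locally constant.

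\textbf{Upper bound.} Fix a test point $\bx$ with $f^*(\bx) = y^*$ and let $B_\bx$ be the neighborhood from Assumption~\ref{ass: f*} on which $f^* \equiv y^*$. I would first show that the weights $w_i := 1/\|\bx - \bx_i\|^\beta$ from training points \emph{inside} $B_\bx$ asymptotically dominate those outside: since the order statistics $r_{(i)}$ of $\|\bx - \bx_i\|$ behave like $(i/m)^{1/d}$, the weight from the nearest neighbors scales like $(m\mu(\bx))^{\beta/d}$, whereas the contribution from points outside $B_\bx$ is at most $m/\rho(\bx)^\beta$, and for $\beta > d$ the former dominates as $m \to \infty$. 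Conditional on the locations, misclassification at $\bx$ then reduces to the event $T > W/2$, where $T := \sum_{i \in B_\bx} Z_i w_i$ is the ``flipped weight'', $Z_i \sim \mathrm{Bern}(p)$ are independent, and $W := \sum_{i \in B_\bx} w_i$. Markov's inequality immediately yields $\Pr[T > W/2] \leq 2\E[T]/W = 2p$. To sharpen this to the claimed $\log^{\beta/(\beta-d)}(1/p) \cdot p$ bound, I would perform a dyadic truncation: partition the indices by weight scale $w_i \in [2^{-k}\tau, 2^{-k+1}\tau]$, apply a Bernstein-type concentration to the small-weight tail (using that $\sum_i w_i^2 / W^2$ is bounded by a constant depending only on $\beta/d$, again via the order-statistic decay), and union bound over the few terms above the truncation scale $\tau$. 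Optimizing $\tau$ produces the $\log$ factor. Finally, integrate over $\bx$; by Assumption~\ref{ass: f*} the $f^*$-boundary has zero Lebesgue measure, so the bound carries over to $\Lcal(\hat{h}_\beta)$.

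\textbf{Lower bound.} Construct the explicit instance $\mu = \Unif(B(0, 1))$ and $f^* \equiv +1$, which satisfies Assumption~\ref{ass: f*}. At a typical test point, misclassification occurs whenever $\sum_{\text{flipped}} w_i > W/2$, and in particular it suffices that the $k^*$ nearest neighbors are all flipped, where $k^*$ is the smallest integer with $\sum_{i=1}^{k^*} i^{-\beta/d} > \sum_{i > k^*} i^{-\beta/d}$ in the typical regime $r_{(i)} \asymp (i/m)^{1/d}$. Using the elementary bounds $\sum_{i=1}^{k} i^{-\beta/d} \geq 1$ and $\sum_{i > k} i^{-\beta/d} \leq k^{1-\beta/d}/(\beta/d - 1)$, a short computation shows $k^* \leq \bigl(2^{\beta/d}/(\beta/d - 1)\bigr)^{1/(\beta/d - 1)} = c(\beta/d)$. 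Since the label flips are independent, $\Pr[\text{first } k^* \text{ all flipped}] = p^{k^*} \geq p^{c(\beta/d)}$. A standard Poissonization argument for the nearest-neighbor order statistics shows that the typical regime (together with the event that this flip pattern actually causes the flipped-weight sum to exceed $W/2$) holds with at least constant probability over the locations, yielding $\Lcal(\hat{h}_\beta) \geq C_1(\beta/d) \cdot p^{c(\beta/d)}$.

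\textbf{Main obstacle.} The key technical challenge is the upper bound, specifically obtaining tight concentration for the heavy-tailed weighted sum $\sum_i Z_i w_i$ beyond the naive Markov rate of $O(p)$: the $\log^{\beta/(\beta - d)}(1/p)$ factor emerges only after a careful analysis of how the $w_i$'s aggregate across spatial scales, with the dependence on $\beta/d$ tracking the rate of the order-statistic decay. The lower bound is conceptually simpler --- it reduces to an explicit flip-pattern calculation combined with order-statistic estimates --- but still requires nontrivial bookkeeping to match the exact constant $c(\beta/d)$ claimed in the statement.
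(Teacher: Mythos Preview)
The theorem asserts $\Lcal(\hat h_\beta)\ge C_1(\beta/d)\,p^{c(\beta/d)}$ for \emph{every} $\mu,f^*$ satisfying Assumption~\ref{ass: f*}, but you only construct a single instance ($\mu=\Unif(B(\zero,1))$, $f^*\equiv+1$). That proves a strictly weaker statement. The paper's argument works pointwise at almost every test point $\bx$ for arbitrary $\mu,f^*$: it first establishes (via an order-statistic representation through i.i.d.\ exponentials and concentration thereof) an ``only $k$ matter'' lemma---if the $k$ nearest labels all agree then $\hat h_\beta(\bx)$ equals that common label with constant probability---and separately that for fixed $k$ the $k$ nearest neighbors asymptotically lie inside the constant-label neighborhood $B_\bx$. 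Conditioning on all $k$ of them being flipped (probability $p^k$, independent of locations) then forces $\hat h_\beta(\bx)=-f^*(\bx)$ with constant probability. Your flip-pattern calculation is the same idea; the fix is routine, but as written you have not proved the theorem.

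\textbf{Upper bound: a confusion, though your core step already suffices.} Your Markov inequality gives $\Pr[T>W/2\mid\text{locations}]\le 2\E[T]/W=2p$; once you have shown the outside contribution is $o(W)$ (which is correct: $W_{\text{out}}\le m/\rho(\bx)^\beta$ while $W_{\text{in}}\gtrsim m^{\beta/d}$) and integrated over $\bx$ via reverse Fatou, this already yields $\Lcal(\hat h_\beta)\le 2p$. That is \emph{stronger} than the stated bound $C_2\log^{1/(1-d/\beta)}(1/p)\cdot p$, not weaker---the log factor only enlarges the right-hand side. So your proposed dyadic/Bernstein ``sharpening'' is unnecessary and aimed in the wrong direction. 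This Markov route is genuinely different from the paper's, which reuses the same ``only $k$ matter'' lemma with $k=\log^{\alpha/(\alpha-1)}(1/p)$ (where $\alpha=\beta/d$) and union-bounds the event that at least one of those $k$ neighbors is flipped, costing $kp$; that $k$ is exactly the source of their log factor. Your argument is more direct and tighter, at the cost of not sharing machinery with the lower bound.
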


In particular, the theorem implies that for any $\beta>d$ it holds that $\Lcal(\hat{h}_\beta)=\widetilde{\Ocal}(p)$,
hence in low noise regimes the error is never too large.
Moreover, we note that the lower bound (of the form $\Omega(\mathrm{poly}(p))$ for any $\beta>d$) holds for \emph{any} target function satisfying mild regularity assumptions. Therefore, the tempered cost of overfitting holds not only in a minimax sense, but for any instance.

Further note that since we know that $\beta = d$ leads to benign overfitting, one should expect the lower bound in \thmref{thm: tempered} to approach $0$ as $\beta\to d^{+}$. Indeed, the lower bound's polynomial degree satisfies
$c(\beta/d)=\big(\frac{8\cdot 2^{\beta/d}}{\beta/d-1}\big)^{\frac{1}{\beta/d-1}}\overset{\beta\to d^{+}}{\longrightarrow}\infty$,
and thus $p^{c(\beta/d)}\overset{\beta\to d^{+}}{\longrightarrow}0$.\footnote{
To be precise, one needs to make sure that the constant $C_1(\beta/d)$ does not blow up, which is indeed the case.
}

We provide below a sketch of the main ideas that appear in the proof of Theorem~\ref{thm: tempered}, which is provided in Appendix~\ref{sec: tempered proof}.
In a nutshell, the proof establishes that when $\beta>d$, the predictor $\hat{h}_\beta$ is highly \emph{local}, and thus prediction at a test point is affected by flipped labels nearby, yet only by them. The proof essentially shows that in this parameter regime, $\hat{h}_\beta$ behaves similar to the $k$ nearest neighbor ($k$-NN) method for some finite $k$ that depends on $\beta/d$ (although notably, as opposed to $\hat{h}_\beta$, $k$-NN does not interpolate), and has a similarly tempered generalization guarantee accordingly.

\begin{proof}[Proof sketch of Theorem~\ref{thm: tempered}]
Looking at some test point $\bx\in\reals^d$, we are interested in understanding the prediction $\hat{h}_\beta(\bx)$.
Clearly, by definition in \eqref{eq: h_beta}, the prediction depends on the random variables $\norm{\bx-\bx_i}^{-\beta}$ for $i\in[m]$, so that closer datapoints have a great affect on the prediction at $\bx$.
Denote by $y_{(1)},\dots,y_{(m)}$ the labels ordered according to the distance of their corresponding datapoints,
namely $\norm{\bx-\bx_{(1)}}\leq \norm{\bx-\bx_{(2)}}$ $\leq\dots\leq \norm{\bx-\bx_{(m)}}$.
By analyzing the distribution of distances from the sample to $\bx$, for datapoints sufficiently close to $\bx$ we can jointly approximate the random variables by
\[
\frac{1}{\|\bx-\bx_{{(i)}}\|^{\beta}}
\approx
\mu(\bx)\frac{(\sum_{i=1}^{m+1}E_i)^{\beta/d}}{(\sum_{j=1}^{i}E_j)^{\beta/d}}
~,
\]
where $E_1,\dots,E_{m}\overset{i.i.d.}{\sim} \exp(1)$ are standard exponential random variables. Furthermore, datapoints which are more than some constant distance away from $\bx$ can contribute at most a constant, so for some $m'<m$ we obtain
\begin{equation}
\label{eq: h as sumE}
\hat{h}_\beta(\bx)
\approx\sign\bigg(\mu(\bx)\bigg(\sum_{i=1}^{m+1}E_i\bigg)^{\beta/d}\sum_{i=1}^{m'}\frac{y_{(i)}}{(\sum_{j=1}^{i}E_j)^{\beta/d}}+\Ocal(m)
\bigg)~.
\end{equation}
Since $\E[\sum_{j=1}^{i}E_j]=i$, we apply concentration bounds for sums of exponential variables to argue that with high probability $\sum_{j=1}^{i}E_j\approx i$ simultaneously over all $i\in\NN$, so the prediction is roughly
\[
\hat{h}_\beta(\bx)
\approx \sign\bigg(\mu(\bx)(m+1)^{\beta/d}\sum_{i=1}^{m'}\frac{y_{(i)}}{i^{\beta/d}}+\Ocal(m)\bigg)
\approx
\sign\bigg(\sum_{i=1}^{m'}\frac{y_{(i)}}{i^{\beta/d}}\bigg)
~,
\]
since $\Ocal(m)\ll (m+1)^{\beta/d}$ is asymptotically negligible and $\mu(\bx)(m+1)^{\beta/d}>0$.

Crucially, for any $\beta>d$,
the sum above converges, and therefore there exists a constant $k\in\NN$ (that depends only on the ratio $\beta/d$) so that the tail is smaller than the first $k$ summands:
\[
\left|\sum_{i=k+1}^{m'}\frac{y_{(i)}}{i^{\beta/d}}\right|
~\lesssim~\sum_{i=k+1}^{\infty}\frac{1}{i^{\beta/d}}
~\lesssim~
\frac{1}{k^{\beta/d-1}} 
~\ll~
\sum_{i=1}^{k}\frac{1}{i^{\beta/d}}
~.
\]
Therefore,
under the event that all nearby labels coincide,
the prediction depends only on the $k$ nearest neighbors, and
we would get that predictor returns their value.
By \assref{ass: f*}, for sufficiently large sample size $m$ and fixed $k$, for almost every $\bx$ the $k$ nearest neighbors should be labeled the same as $\bx$, namely $f^*(\bx)=f^*(\bx_{(1)})=\dots=f^*(\bx_{(k)})$. So overall, we see that
\[
\Pr[\hat{h}_\beta(\bx)\neq f^*(\bx)]\leq
\Pr[\underset{\text{flipped label}}{\underbrace{\exists i\in[k]:~y_{(i)}\neq f^*(\bx_{(i)})}}]=1- (1-p)^{k}\leq kp~,
\]
and similarly
\[
\Pr[\hat{h}_\beta(\bx)\neq f^*(\bx)]
\geq  \Pr[\underset{\text{all $k$ labels flipped}}{\underbrace{\forall i\in[k]:~y_{(i)}\neq f^*(\bx_{(i)})}}]
=p^k~.
\]
The two inequalities above show the desired upper and lower bounds on the prediction error.

\end{proof}

\section{Catastrophic overfitting} \label{sec: catastrophic}
We now turn to present our main result for the $\beta<d$ parameter regime,
establishing that $\hat{h}_\beta$ can catastrophically overfit:

\begin{theorem} \label{thm:catastrophic}
For any $d\in\NN$ and any $0<\beta<d$, there exist a density $\mu$ and a target function $f^*$ satisfying \assref{ass: f*}, such that for some absolute constants $C_1,C_2\in (0,1)$, and $c(\beta,d) := C_1^{\beta}\cdot \left(1-{\beta}/{d}\right)>0$, it holds for any $p \in (0,0.49)$ that
    \begin{align*}
        \Lcal(\hat{h}_{\beta}) \geq C_2 \cdot c(\beta,d)~.
    \end{align*}
\end{theorem}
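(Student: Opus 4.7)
My plan is to exhibit an explicit pair $(\mu,f^*)$ satisfying Assumption~\ref{ass: f*} for which the asymptotic clean classification error is bounded below by $c(\beta,d)$ independently of $p$. The starting point is a law-of-large-numbers reduction specific to the regime $\beta<d$: when $\mu$ has bounded support, the expectation $\E[\norm{\bx-\bx_1}^{-\beta}]=\int\norm{\bx-\bz}^{-\beta}\mu(\bz)d\bz$ is finite precisely because $\beta<d$, so (after standard truncation to handle tails when $2\beta\geq d$) the weak law of large numbers yields
\[
\tfrac{1}{m}\sum_{i=1}^{m}\tfrac{y_i}{\norm{\bx-\bx_i}^\beta}~\xrightarrow[m\to\infty]{\text{in prob.}}~(1-2p)\,F(\bx),\qquad F(\bx):=\int\tfrac{f^*(\bz)}{\norm{\bx-\bz}^\beta}\mu(\bz)\,d\bz.
\]
Consequently $\hat{h}_\beta(\bx)\to\sign(F(\bx))$ wherever $F(\bx)\ne 0$, and dominated convergence on the outer expectation over $\bx\sim\mu$ gives $\Lcal(\hat{h}_\beta)\ge\mu(\{F>0\}\cap\{f^*=-1\})$. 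The crucial feature is that $\sign(F(\bx))$ is \emph{independent of $p\in(0,1/2)$} (the factor $1-2p$ being positive), which is what will enable a $p$-independent lower bound---the defining property of catastrophic overfitting.

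For the construction, I would take $\mu$ spherically symmetric on $B(0,1)$ with $f^*(\bx)=-1$ on $B(0,r_0)$ and $f^*(\bx)=+1$ on $B(0,1)\setminus B(0,r_0)$, for some $r_0=r_0(\beta,d)\in(0,1)$ to be tuned. Assumption~\ref{ass: f*} is satisfied since $\mu$ is continuous on the interior of its support and the decision boundary is a measure-zero sphere. By rotational symmetry $F$ depends only on $\norm{\bx}$. For the uniform case $\mu=\Unif(B(0,1))$, a direct polar-coordinate calculation yields
\[
F(0) \;=\; \frac{d}{d-\beta}\bigl(1-2\,r_0^{\,d-\beta}\bigr),
\]
which is strictly positive whenever $r_0^{d-\beta}<1/2$. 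Extending this to $\bx\in B(0,r_0/2)$ via triangle-inequality bounds of the form $\norm{\bx-\bz}\le(3/2)\norm{\bz}$ (valid for $\norm{\bx}\le r_0/2,\,\norm{\bz}\ge r_0$) certifies that $F(\bx)\ge \tfrac{d}{d-\beta}\bigl[(2/3)^{\beta}(1-r_0^{d-\beta})-(3/2)^{d-\beta}r_0^{d-\beta}\bigr]$, which remains positive for appropriately small $r_0$. On this ball the true label is $-1$ while the asymptotic prediction is $+1$; this yields a misclassified region of $\mu$-mass $(r_0/2)^d$, and hence $\Lcal(\hat{h}_\beta)\ge (r_0/2)^d$.

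The main obstacle is calibrating $r_0$ (and possibly the radial profile of $\mu$) so that the misclassified mass $(r_0/2)^d$ matches the target $c(\beta,d)=C_1^\beta(1-\beta/d)$. The naive balance point $r_0^{d-\beta}\approx 1/2$ only gives $(r_0/2)^d\lesssim 2^{-d/(d-\beta)}$, which decays far faster than the linear factor $(1-\beta/d)$ as $\beta\to d^-$. Closing this gap will require either a non-uniform $\mu$ that upweights the outer $+1$ region---for instance density $\propto\norm{\bz}^a$ with $a>0$, which relaxes the positivity condition from $r_0^{d-\beta}<1/2$ to $r_0^{d-\beta}<A/(A+B)$ for explicit $A,B$ depending on $a$---or a case split between $\beta\le d/2$, where the naive construction already matches the target up to absolute constants (yielding the $C_1^\beta$ factor), and $\beta>d/2$, where the boundary-biased $\mu$ together with the $(d-\beta)^{-1}$ prefactor appearing in $F(0)$ produces the linear $(1-\beta/d)$ factor after a careful renormalization. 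Unifying the two regimes should then give the stated bound $\Lcal(\hat{h}_\beta)\ge C_2\,C_1^{\beta}(1-\beta/d)$.
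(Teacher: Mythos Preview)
Your law-of-large-numbers reduction is correct and is genuinely different from the paper's argument. The paper never invokes an LLN; instead it bounds the finite sum $\sum_i y_i/\norm{\bx-\bx_i}^\beta$ directly by decomposing it into inner/outer/noise pieces and controlling the inner piece via order statistics of uniform variables (Lemmas~\ref{lem:order_stat} and~\ref{lem:sum_order}). Your observation that $\beta<d$ forces $\E\bigl[\norm{\bx-\bx_1}^{-\beta}\bigr]<\infty$, so that the weak LLN reduces the asymptotic sign of the predictor to the deterministic sign of $F(\bx)=\int f^*(\bz)\norm{\bx-\bz}^{-\beta}\mu(\bz)\,d\bz$, is both simpler and more conceptual: it makes transparent \emph{why} the regime $\beta<d$ is global/catastrophic, and it sidesteps all the exponential-variable machinery. (The weak LLN for i.i.d.\ variables with finite first moment requires no variance control, so the truncation aside is not even needed.)

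Where your proposal is weaker is the construction, and here you are overcomplicating the fix. With the uniform density on $B(0,1)$, the mass of the negatively-labeled ball is tied to its radius as $r_0^d$, and the positivity constraint $r_0^{d-\beta}\lesssim 1$ then forces the mass to vanish like $r_0^d\lesssim 2^{-d/(d-\beta)}$, far smaller than the target $C_1^\beta(1-\beta/d)$. Neither a radial density $\norm{\bz}^a$ nor a case split on $\beta\lessgtr d/2$ is necessary. The paper's construction simply \emph{decouples mass from radius}: put mass exactly $c$ on an inner ball $B(0,r)$ and mass $1-c$ on an outer annulus $\{3r\le\norm{\bz}\le R\}$, with $r,R$ fixed absolute constants and $c$ a free parameter. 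Your own LLN calculation then gives, for every $\bx\in B(0,r)$,
\[
F(\bx)\;\ge\;\frac{1-c}{(R+r)^\beta}\;-\;\frac{c}{V_d r^d}\int_{B(0,r)}\frac{d\bz}{\norm{\bx-\bz}^\beta}\;\ge\;\frac{1-c}{(R+r)^\beta}-\frac{2c}{(1-\beta/d)\,r^\beta}~,
\]
which is positive once $c\le\tfrac14(1-\beta/d)(1+R/r)^{-\beta}$. This is already of the form $C_1^\beta(1-\beta/d)$ with $C_1=(1+R/r)^{-1}$, and the misclassified mass is exactly $c$. So with this one change to your construction, your LLN argument proves the theorem in a single paragraph, with better constants than the paper's order-statistic route.
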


The theorem states that whenever $\beta < d$,
the error can be arbitrarily larger than the noise level, since $\Lcal(\hat{h}_{\beta}) = \Omega(1)$ even as $p\to 0$.
Note that since the benign overfitting result for $\beta = d$ holds over any distribution and target function (under the same regularity assumptions), the fact that the lower bound of \thmref{thm:catastrophic} approaches $0$ as $\beta \to d$ is to be expected.

\begin{remark} \label{remark: intrinsic dim}
Interestingly, the only role played by $d$ in the proofs of Theorems~\ref{thm: tempered} and \ref{thm:catastrophic} is the fact that locally, the probability mass
scales as $\int_{B(\bx,r)} \mu\asymp r^{d}$ (for almost all $\bx$ and
small $r>0$). Accordingly, when the data distribution is supported on a lower dimensional manifold of dimension $d_{\mathrm{int}}<d$, the result suggests that tempered overfitting occurs whenever $\beta>d_{\mathrm{int}}$, and that catastrophic overfitting can occur whenever $\beta<d_{\mathrm{int}}$. Although we do not attempt to formalize it in this paper,\footnote{This should not be difficult in principle, but the proofs would become substantially more technical when the manifold is non-linear.} we conjecture that in general the parameter $d$ can be replaced by $d_{\mathrm{int}}$ in all our results. Since $d_{\mathrm{int}}$ generally can only be estimated, it suggests a potential practical implication:
Setting $\beta$ to an \emph{over-estimate} of $d_{\mathrm{int}}$ is less harmful than under-estimating it, as the former leads to tempered overfitting whereas the latter may lead to catastrophic overfitting. This is further supported by our experiments in Section~\ref{sec: experiment}.
\end{remark}

We provide below a sketch of the main ideas of the proof, which is provided in Appendix~\ref{sec: catastrophic proof}.
Notably, the main idea behind the proof is quite different from that of \thmref{thm: tempered}. There, the analysis was highly \emph{local}, i.e. for every test point $\bx$ we showed that we can restrict our analysis to a small neighborhood around that point. In contrast,  the reason we will obtain catastrophic overfitting for $\beta < d$ is precisely that the predictor is too \emph{global}, as we will see that for every test point $\bx$, all points $\bx_i$ in the training set have a non-negligible effect on $\hat{h}_{\beta}(\bx)$. 
Our proof essentially shows that whenever a small region of constant probability mass is surrounded by the opposite label, the predictor will mislabel it, incurring a constant error. Our construction is therefore quite generic, and we expect the same intuition to extend to many target functions $f^*$. The full proof can be found in the appendix.

\begin{proof}[Proof sketch of Theorem~\ref{thm:catastrophic}]
We will construct an explicit distribution and target function for which $\hat{h}_{\beta}$ exhibits catastrophic overfitting. The distribution we consider consists of an inner ball of constant probability mass labeled $-1$, and an outer annulus labeled $+1$, as illustrated in Figure~\ref{fig:lowerbound_plot}. 
Specifically, we denote $c:=c(\beta, d)=C_1^{\beta}\cdot \left(1-\beta/d\right)$ for some absolute constant $C_1>0$ to be specified later, and consider the following density and target function:
\begin{align*}
   \mu_{c}(\bx)  =
    \begin{cases}
        \frac{c}{\mathrm{Vol} \left(B\left(\zero, \frac{1}{4}\right)\right)} & \text{if~~}\norm{\bx} \leq \frac{1}{4} \\
        \frac{1-c}{\mathrm{Vol} \left(B\left(\zero, 1\right) \setminus B\left(\zero, \frac{3}{4}\right)\right)} & \text{if~~}\frac{3}{4} \leq \norm{\bx} \leq 1 \\ 
        0 & \text{else}
    \end{cases}
    ,~~~~~~~~~
    f^*(\bx) =
    \begin{cases}
        -1 & \text{if~~}\norm{\bx} \leq \frac{1}{4} \\
        1 & \text{else}
    \end{cases}
    .
\end{align*}

\begin{figure}[t!]
    \centering
    \includegraphics[trim=0cm 8.5cm 0cm 8.5cm,clip=true, width=0.9\textwidth]{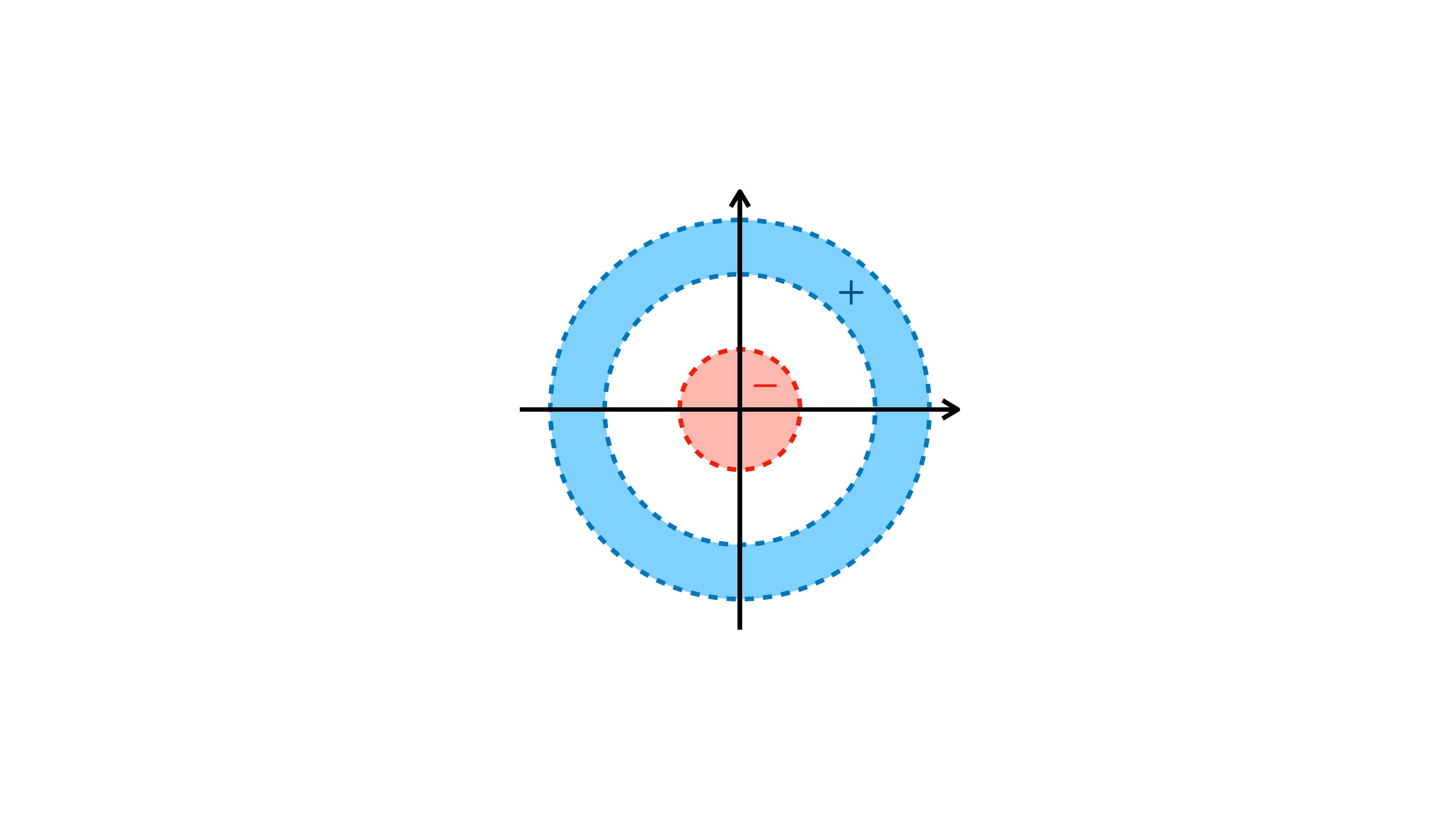}
    \caption{Illustration of the lower bound construction used in the proof of Theorem~\ref{thm:catastrophic}.
    When $\beta<d$, the inner circle will be misclassified as $+1$ with high probability, inducing constant error.
    }
    \label{fig:lowerbound_plot}
\end{figure}

We consider a test point $\bx$ with $\norm{\bx}\leq \frac{1}{4}$, and will show that for sufficiently large $m$, with high probability $\bx$ will be misclassified as $+1$.
This implies the desired result, since then
$$
\Lcal(\hat{h}_{\beta})\gtrsim \Pr_\bx\left[\|\bx\|\leq\tfrac{1}{4}\right]=c~.
$$

To that end, we decompose
\begin{align}
    \sum_{i=1}^m \frac{y_i}{\norm{\bx-\bx_i}^{\beta}} &=  \sum_{i:\norm{\bx_i}\leq \frac{1}{4}} \frac{y_i}{\norm{\bx-\bx_i}^{\beta}} + \sum_{i:\norm{\bx_i}\geq \frac{3}{4}} \frac{y_i}{\norm{\bx-\bx_i}^{\beta}} \nonumber
    \\
    &\geq  -\underset{=:T_1}{\underbrace{\sum_{i:\norm{\bx_i}\leq \frac{1}{4}} \frac{1}{\norm{\bx-\bx_i}^{\beta}}}} 
    + \underset{=:T_2}{\underbrace{\sum_{i:\norm{\bx_i}\geq \frac{3}{4}} \frac{1-2p}{\norm{\bx-\bx_i}^{\beta}}}} + \underset{=:T_3}{\underbrace{\sum_{i:\norm{\bx_i}\geq \frac{3}{4}} \frac{y_i - 1 + 2p}{\norm{\bx-\bx_i}^{\beta}}}}, \label{eq: decompose}
\end{align}
where $T_1$ crudely bounds the contribution of points in the inner circle, $T_2$ is the expected contribution of outer points labeled $1$, and $T_3$ is a perturbation term.
Noting that $T_2>0$, our goal is to show that $T_2$ dominates the expression above, implying that \eqref{eq: decompose} is positive and thus $h_{\beta}(\bx)=1$.

Let $k:=\big|\{i : \norm{\bx_i}\leq \frac{1}{4}\}\big|$ denote the number of points inside the inner ball, and note that we can expect $k\approx \E[k]=cm$. To bound $T_1$, we express its distribution using exponential random variables in a manner that is similar to the proof of \thmref{thm: tempered}. Specifically, for standard exponential random variables $E_1,\dots,E_{m}\overset{i.i.d.}{\sim} \exp(1)$, 
we show that with high probability
\begin{align*}
    -T_1 & \gtrsim -  \sum_{i:\norm{\bx_i}\leq \frac{1}{4}} \frac{ \left(\sum_{i=1}^m E_i\right)^{\beta/d}}{(\frac{1}{4 c^{1/d}})^{\beta}\left(\sum_{j=1}^i E_j\right)^{\beta/d}}
     \gtrsim_{(1)} - c^{\beta/d}4^\beta \cdot m^{\beta/d} \cdot \sum_{i=k+1}^m \frac{1}{i^{\beta/d}}
    \\ &
    \gtrsim - c^{\beta/d}4^\beta \cdot m^{\beta/d} \cdot \frac{k^{1-\beta/d}}{1- \beta/d}
    \gtrsim_{(2)} -cm\cdot \frac{4^\beta}{(1-\beta/d)} ~,
\end{align*}
where $(1)$ uses concentration bounds on the sums of exponential random variables to argue that $\sum_{j=1}^iE_j \approx i$, and $(2)$ follows from showing $k\approx cm$.

To show that $T_2$ is sufficiently large, we use the fact that $\norm{\bx - \bx_i} \leq \norm{\bx} + \norm{\bx_i} \leq \frac{5}{4}$, and that $\abs{\left\{i: \norm{\bx_i} \geq \frac{3}{4}\right\}} \approx (1-c) m \geq\half m$ with high probability to obtain
\begin{align*}
T_2=\sum_{i:\norm{\bx_i}\geq \frac{3}{4}} \frac{1-2p}{\norm{\bx-\bx_i}^{\beta}} \geq (1-2p)\abs{\left\{\bx_i: \norm{\bx_i} \geq \frac{3}{4}\right\}} \cdot \left(\frac{4}{5}\right)^{\beta}
\gtrsim m \cdot \left(\frac{4}{5}\right)^{\beta}.
\end{align*}
Lastly, we show that $T_3$ is asymptotically negligible, by noting that $\E[T_3]=0$ hence
$T_3=o(m)$ with high probability by Hoeffding's inequality.
Thus \eqref{eq: decompose} becomes
\begin{align*}
\hat{h}_\beta(\bx)
=\sign\left[\sum_{i=1}^m \frac{y_i}{\norm{\bx-\bx_i}^{\beta}}\right]
\gtrsim \sign\left[m\left(\left(\frac{4}{5}\right)^\beta - \frac{c \cdot4^\beta}{1-\beta/d}\right)
\right]
~.
\end{align*}
Overall, we see that the right-hand side above is positive as long as $c=C_1^{\beta}\cdot \left(1-\frac{\beta}{d}\right) < \frac{1-\beta/d}{5^\beta}$,
or equivalently $C_1<\frac{1}{5}$, meaning that $\hat{h}_\beta(\bx)=1$ even though $f^*(\bx) = -1$.

\end{proof}

\section{Experiments} \label{sec: experiment}

In this section, we provide numerical simulations that illustrate and complement our theoretical findings.
In all experiments, we sample $m$ datapoints according to some distribution, flip
each
label
independently with probability $p$, and plot the clean test error of
$\hat{h}_\beta$ for various values of $\beta$.
We ran each experiment $50$ times, and plotted the average error surrounded by a $95\%$ confidence interval.

\subsection{Synthetic data}

We start by discussing several experiments with synthetic data distributions.

\paragraph{Warm up: one dimensional data.}

In our first experiment, we considered data in dimension $d=1$ distributed according to the construction considered in the proof of Theorem~\ref{thm:catastrophic}. In particular, we consider
\begin{align} 
\Dcal_x&=\tfrac{1}{10}\cdot\Unif\left((0,\tfrac{1}{4})\right)+\tfrac{9}{10}\cdot\Unif\left((\tfrac{3}{4},1)\right)~, 
~~~~~~~~
f^*(x)=\begin{cases}
        -1 & \text{if~~}x\in(0,\frac{1}{4}) \\
        1 & \text{else}
    \end{cases}
~.
\label{eq: data 1d}
\end{align}
In Figure~\ref{fig:beta_p_plot_d1}, on the left we plot the results for $m=2000$ and various values of $p$, and on the right we fix $p=0.04$ and vary $m$.

\begin{figure}[h]
    \centering
    \includegraphics[width=0.49\textwidth, clip=true, trim=0 10 10 10]{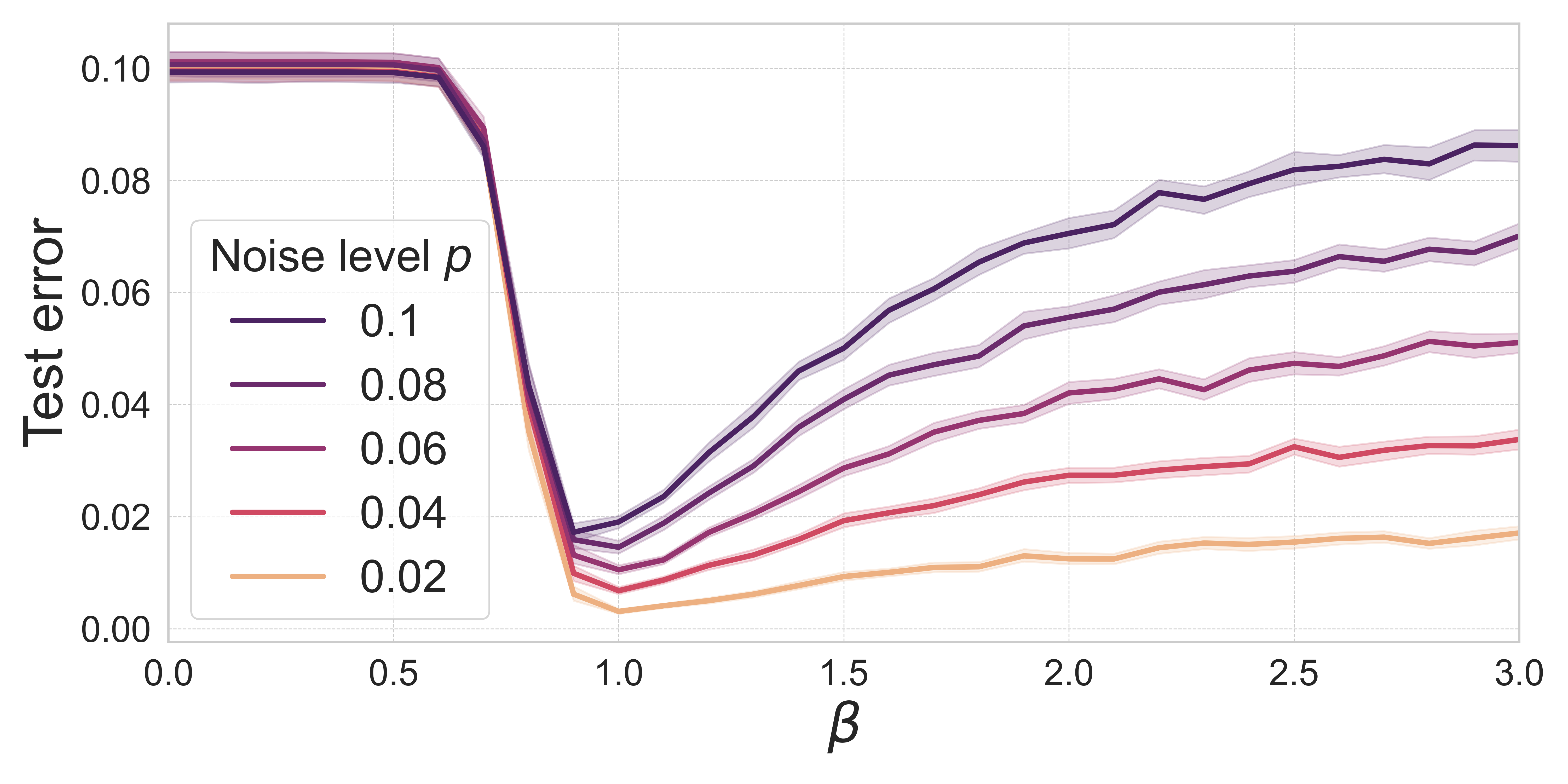}
    \includegraphics[width=0.49\textwidth, clip=true, trim=10 10 0 10]{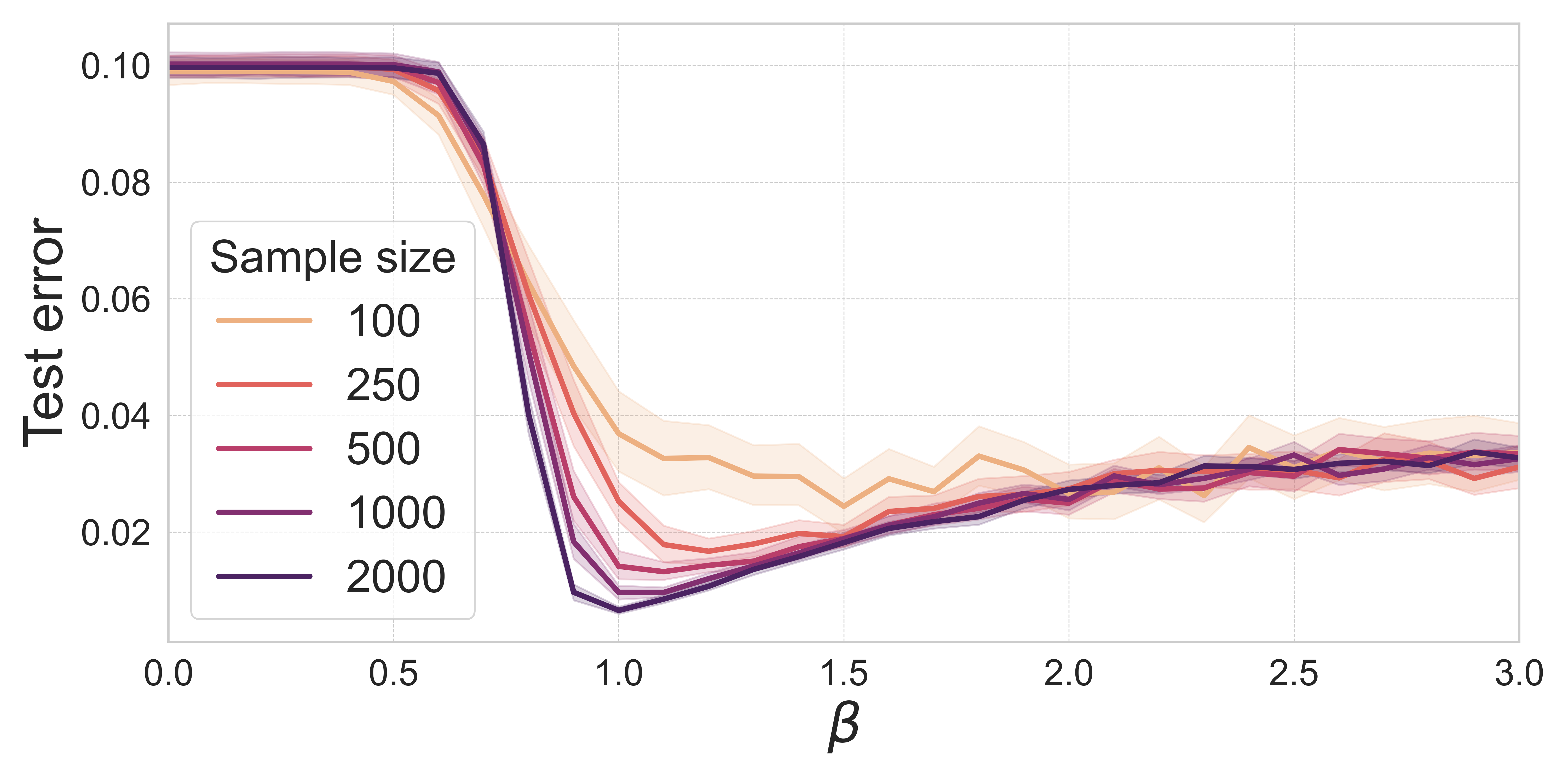}
    \caption{The classification error of $\hat{h}_\beta$ for varying values of $\beta$, with data in dimension $d=1$ given by \eqref{eq: data 1d}.
    On the left, $m=2000$ is fixed, $p$ varies. On the right, $p=0.04$ is fixed, $m$ varies. Best viewed in color.
    }
    \label{fig:beta_p_plot_d1}
\end{figure}

As seen in Figure~\ref{fig:beta_p_plot_d1}, the generalization is highly asymmetric with respect to $\beta$. For $\beta<1$, the test error degrades independently of the noise level $p$, and quickly reaches $0.1$ in all cases, illustrating that the predictor errors on the negative labels (which have $0.1$ probability mass).
On the other hand, for $\beta>1$, the test error exhibits a gradual deterioration. Moreover, we see this deterioration is controlled by the noise level $p$, matching our theoretical finding.
The right figure illustrates all of the discussed phenomena hold similarly for moderate sample sizes, which complements our asymptotic analysis.

\paragraph{Spherical data.}
In our second experiment, we consider a similar distribution over the unit sphere $\mathbb{S}^2\subset\reals^3$, where the inner negatively labeled region is a spherical cap. In particular, consider the spherical cap defined by $A:=\left\{\bx=(x_1,x_2,x_3)\in \mathbb{S}^2 ~\mid~ x_3 > \sqrt{3}/2 \right\}$, and let
\begin{align} 
\Dcal_{\bx}&=
\tfrac{1}{10}\cdot\Unif(A)
+\tfrac{9}{10}\cdot\Unif(\mathbb{S}^2 \setminus A) ~,
~~~~~~~~
f^*(\bx)=\begin{cases}
        -1 & \text{if~~} \bx\in A \\
        1 & \text{else}
    \end{cases}
~.
 \label{eq: data 2d}
\end{align}
In Figure~\ref{fig:beta_p_plot_d2}, on the left we plot the results for $m=2000$ and various values of $p$, and on the right we fix $p=0.04$ and vary $m$.

\begin{figure}[h]
    \centering
    \includegraphics[width=0.49\textwidth, clip=true, trim=0 10 10 10]{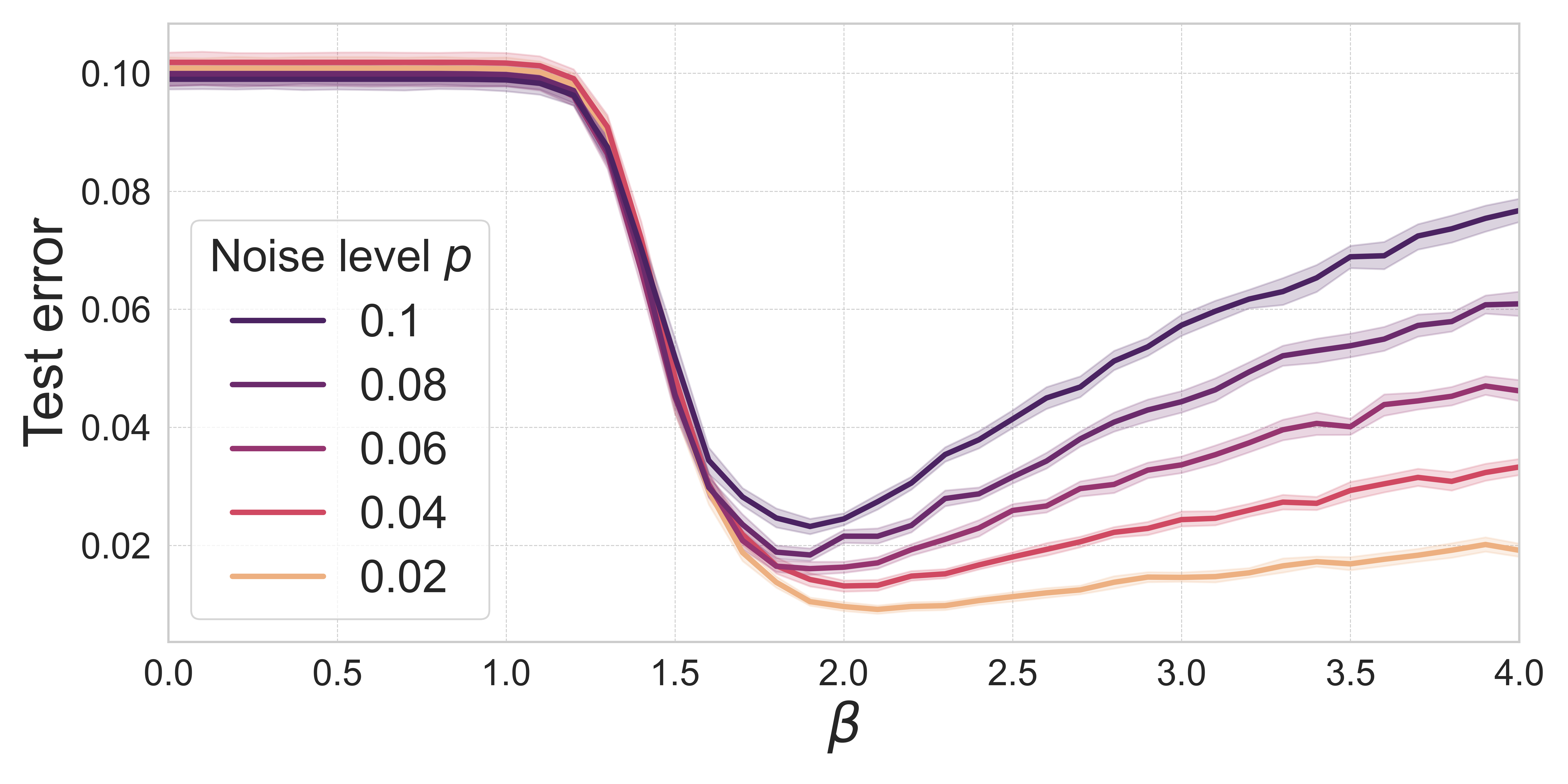}
    \includegraphics[width=0.49\textwidth, clip=true, trim=10 10 0 10]{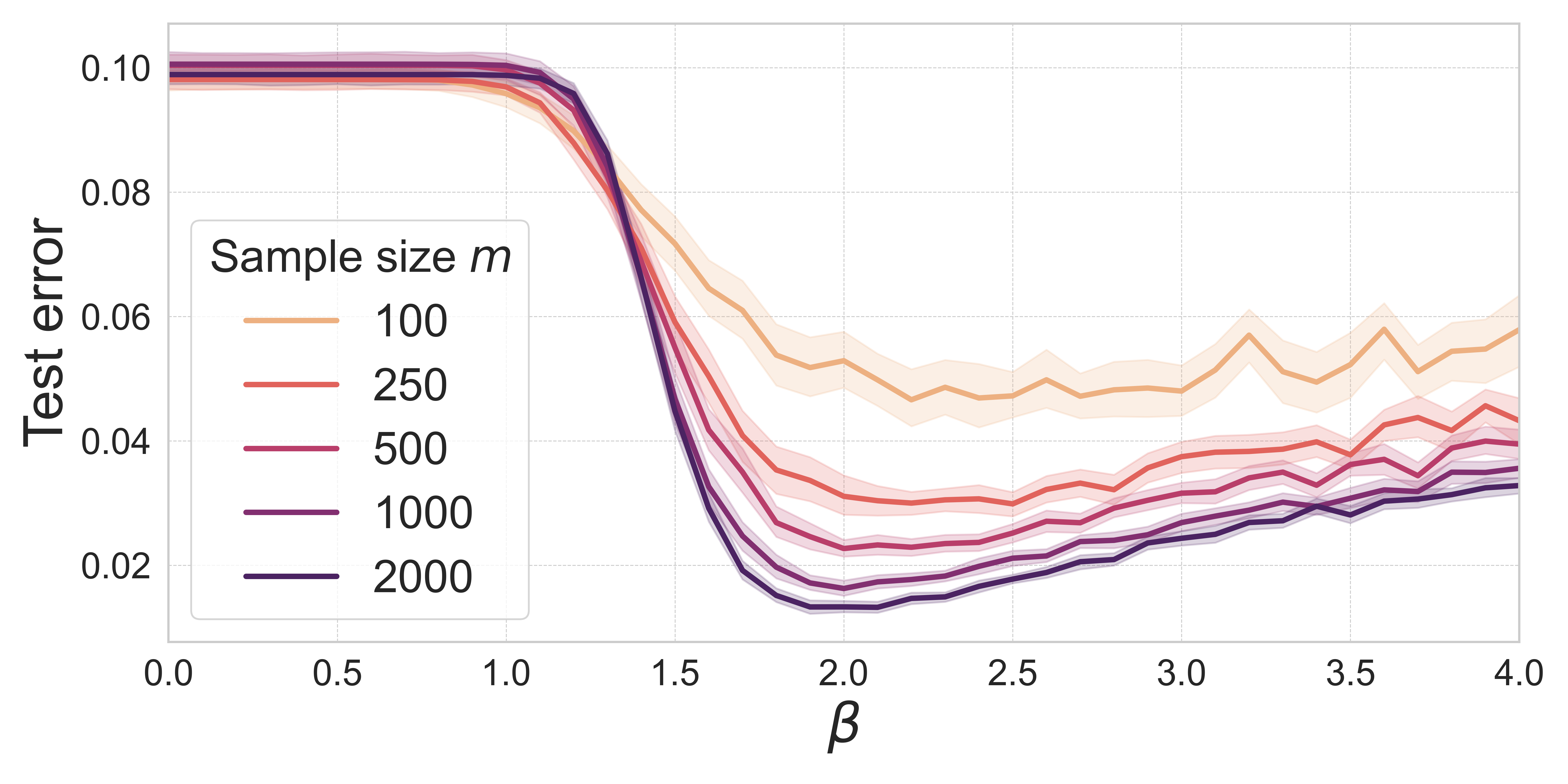}
    \caption{The classification error of $\hat{h}_\beta$ for varying values of $\beta$,
    with data on $\mathbb{S}^2\subset\reals^3$ given by \eqref{eq: data 2d}.
    On the left, $m=2000$ is fixed, $p$ varies. On the right, $p=0.04$ is fixed, $m$ varies. Best viewed in color.
    }
    \label{fig:beta_p_plot_d2}
\end{figure}

As seen in Figure~\ref{fig:beta_p_plot_d2}, the same asymmetric phenomenon holds in which overly large $\beta$ are more forgiving than overly small $\beta$, especially in low noise regimes.
The main difference between the first and second experiment is that the optimal ``benign'' exponent in the second case is $\beta=2$, matching the \emph{intrinsic} dimension of the sphere, even though the data is embedded in $3$-dimensional space. This agrees with our conjecture that for distributions with low intrinsic dimension $d_{\mathrm{int}}<d$, the overfitting behavior depends on $d_{\mathrm{int}}$ rather than $d$ (as discussed in Remark~\ref{remark: intrinsic dim}).

In Appendix~\ref{app: another experiment} we provide an extension of the spherical data experiment, in which the inputs are corrupted by Gaussian noise. As the noise variance increases, hence the dataset is drawn away from having a low intrinsic dimension, the $\beta$ value with minimal test error gradually increases from $2$ to $3$. This illustrates a robustness to input-noise which is prevalent in practice, complementing an aspect that our current formal results do not cover.

\subsection{Intrinsic Dimension of MNIST}

Next, we consider an experiment in which the data consists of images of handwritten $0$ and $1$ digits from the MNIST dataset.
In Figure~\ref{fig:mnist}, on the left we plot the results with respect to the entire training set $m=12,665$ and various values of $p$, and on the right we fix $p=0.1$ and vary $m$.

\begin{figure}[h]
    \centering
    \includegraphics[width=0.49\textwidth, clip=true, trim=0 10 10 10]{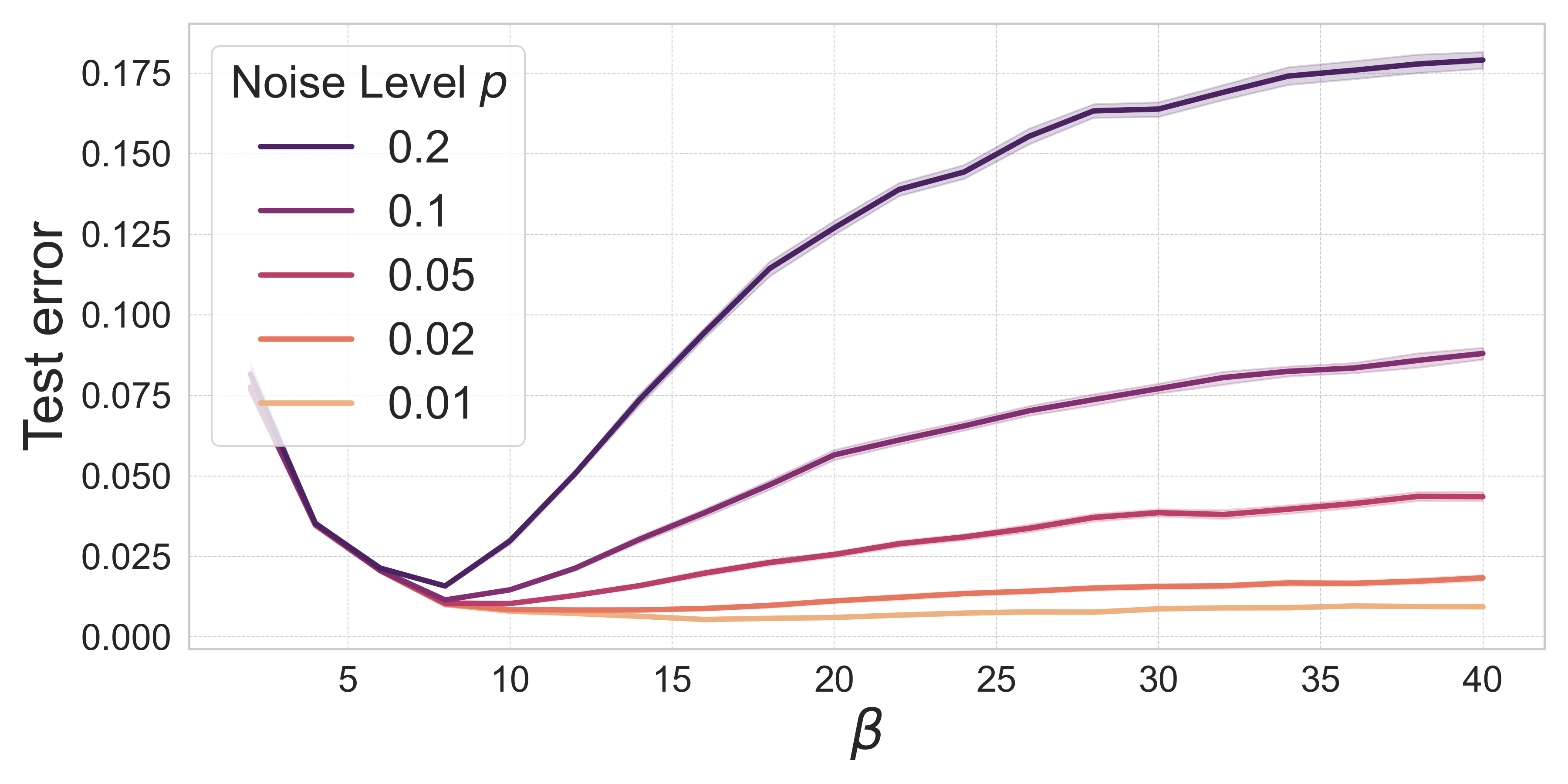}
    \includegraphics[width=0.49\textwidth, clip=true, trim=10 10 0 10]{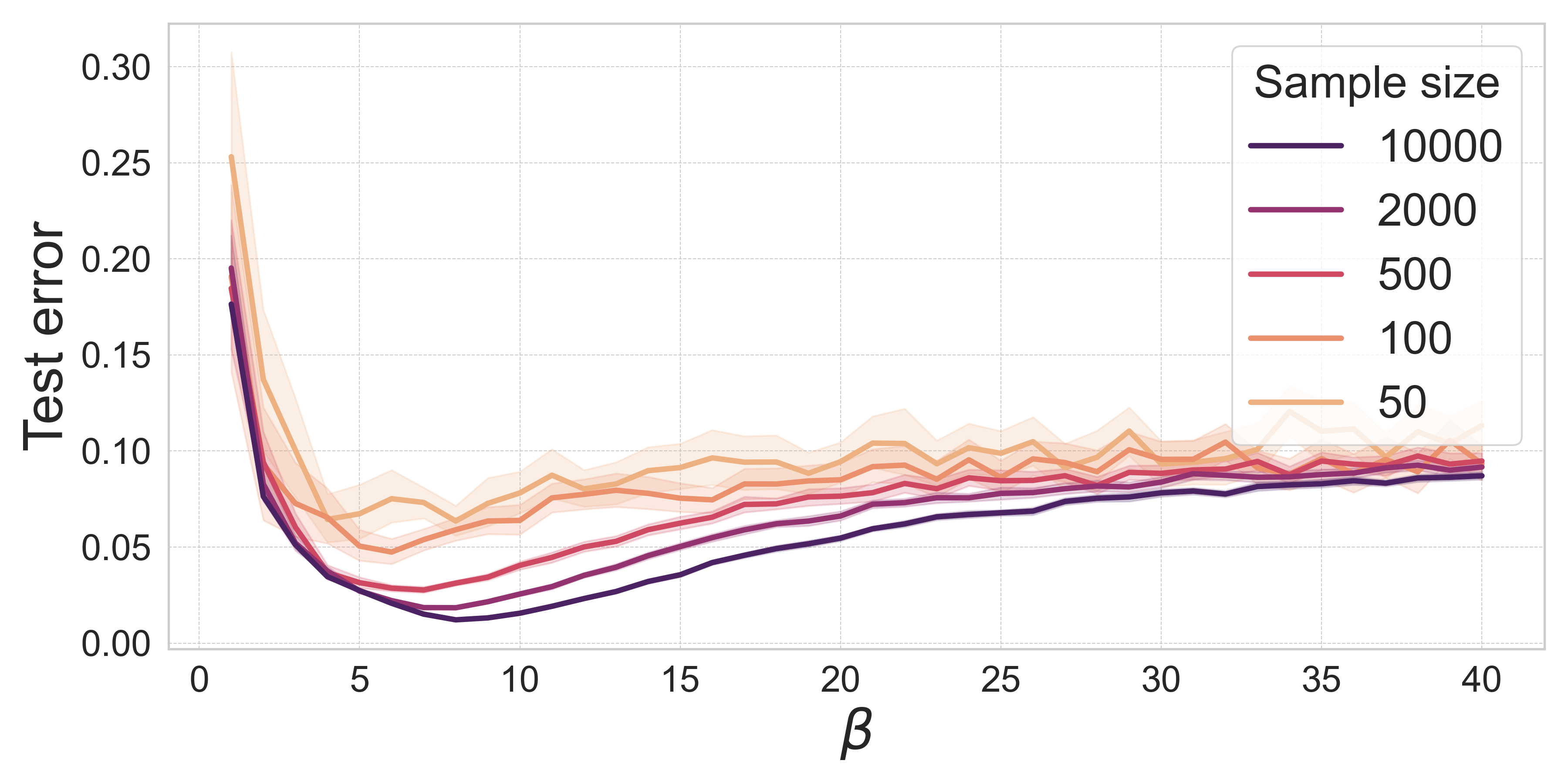}
    \caption{The classification error of $\hat{h}_\beta$ for varying values of $\beta$,
    with respect to MNIST's $0/1$ data.
    On the left, $m$ is fixed to the entire train set, $p$ varies. On the right, $p=0.1$ is fixed, $m$ varies. Best viewed in color.
    }
    \label{fig:mnist}
\end{figure}

As seen in Figure~\ref{fig:mnist}, the same asymmetric phenomenon demonstrated by both our theory as well as the synthetic experiments, clearly holds once more.
It is interesting to note that each image in MNIST is of size $28\times 28=784$ pixels, and so the extrinsic dimension is $784$. Nonetheless, the optimal exponent is roughly $\beta\approx 8\ll 784$, which matches an estimate of the intrinsic dimension of MNIST measured by \citet{pope2021intrinsic}.
Moreover, as seen on the right, the asymptotic phenomenon manifests quite clearly already for small samples sizes, and only becomes more pronounced as the number of samples increases.

\section{Discussion} \label{sec: discuss}
In this work, we characterized the generalization behavior of the NW interpolator for any choice of the hyperparameter $\beta$. Specifically, NW interpolates in a tempered manner when $\beta>d$, exhibits benign overfitting when $\beta=d$, and overfits catastrophically when $\beta<d$. This substantially extends the classical analysis of this method, which only focused on consistency. In addition, it indicates that the NW interpolator is much more tolerant to over-estimating $\beta$ as opposed to under-estimating it. 

Our analysis and experiments both suggest that the dependence on $d$ arises from the assumption that the distributions considered here have a density in $\reals^d$,
and that more generally over-estimating the intrinsic dimension of the data is preferable to under-estimating it when setting $\beta$.

Overall, our results highlight how intricate generalization behaviors, including the full range from benign through tempered to catastrophic overfitting,
can
already appear in simple and well-known interpolating learning rules. We hope these results will further motivate revisiting other fundamental learning rules using this modern viewpoint, going beyond the classical consistency-vs.-inconsistency dichotomy.

\subsection*{Acknowledgments}
This research is supported in part by European Research Council (ERC) grant 754705,
by the Israeli Council for Higher Education (CHE) via the Weizmann Data Science Research Center and by research grants from the Estate of Harry Schutzman and the Anita James Rosen Foundation.
GK is supported by an Azrieli Foundation graduate fellowship.

\bibliographystyle{plainnat}

\bibliography{references}

\begin{thebibliography}{50}
\providecommand{\natexlab}[1]{#1}
\providecommand{\url}[1]{\texttt{#1}}
\expandafter\ifx\csname urlstyle\endcsname\relax
  \providecommand{\doi}[1]{doi: #1}\else
  \providecommand{\doi}{doi: \begingroup \urlstyle{rm}\Url}\fi

\bibitem[Abedsoltan et~al.(2024)Abedsoltan, Radhakrishnan, Wu, and Belkin]{abedsoltan2024context}
Amirhesam Abedsoltan, Adityanarayanan Radhakrishnan, Jingfeng Wu, and Mikhail Belkin.
\newblock Context-scaling versus task-scaling in in-context learning.
\newblock \emph{arXiv preprint arXiv:2410.12783}, 2024.

\bibitem[Bartlett et~al.(2020)Bartlett, Long, Lugosi, and Tsigler]{bartlett2020benign}
Peter~L Bartlett, Philip~M Long, G{\'a}bor Lugosi, and Alexander Tsigler.
\newblock Benign overfitting in linear regression.
\newblock \emph{Proceedings of the National Academy of Sciences}, 117\penalty0 (48):\penalty0 30063--30070, 2020.

\bibitem[Barzilai and Shamir(2024)]{barzilai2024generalization}
Daniel Barzilai and Ohad Shamir.
\newblock Generalization in kernel regression under realistic assumptions.
\newblock In \emph{Forty-first International Conference on Machine Learning}, 2024.

\bibitem[Beaglehole et~al.(2023)Beaglehole, Belkin, and Pandit]{beaglehole2023inconsistency}
Daniel Beaglehole, Mikhail Belkin, and Parthe Pandit.
\newblock On the inconsistency of kernel ridgeless regression in fixed dimensions.
\newblock \emph{SIAM Journal on Mathematics of Data Science}, 5\penalty0 (4):\penalty0 854--872, 2023.

\bibitem[Belkin et~al.(2018{\natexlab{a}})Belkin, Hsu, and Mitra]{belkin2018overfitting}
Mikhail Belkin, Daniel~J Hsu, and Partha Mitra.
\newblock Overfitting or perfect fitting? risk bounds for classification and regression rules that interpolate.
\newblock \emph{Advances in neural information processing systems}, 31, 2018{\natexlab{a}}.

\bibitem[Belkin et~al.(2018{\natexlab{b}})Belkin, Ma, and Mandal]{belkin2018understand}
Mikhail Belkin, Siyuan Ma, and Soumik Mandal.
\newblock To understand deep learning we need to understand kernel learning.
\newblock In \emph{International Conference on Machine Learning}, pages 541--549. PMLR, 2018{\natexlab{b}}.

\bibitem[Belkin et~al.(2019{\natexlab{a}})Belkin, Hsu, Ma, and Mandal]{belkin2019reconciling}
Mikhail Belkin, Daniel Hsu, Siyuan Ma, and Soumik Mandal.
\newblock Reconciling modern machine-learning practice and the classical bias--variance trade-off.
\newblock \emph{Proceedings of the National Academy of Sciences}, 116\penalty0 (32):\penalty0 15849--15854, 2019{\natexlab{a}}.

\bibitem[Belkin et~al.(2019{\natexlab{b}})Belkin, Rakhlin, and Tsybakov]{belkin2019does}
Mikhail Belkin, Alexander Rakhlin, and Alexandre~B Tsybakov.
\newblock Does data interpolation contradict statistical optimality?
\newblock In \emph{The 22nd International Conference on Artificial Intelligence and Statistics}, pages 1611--1619. PMLR, 2019{\natexlab{b}}.

\bibitem[Belkin et~al.(2020)Belkin, Hsu, and Xu]{belkin2020two}
Mikhail Belkin, Daniel Hsu, and Ji~Xu.
\newblock Two models of double descent for weak features.
\newblock \emph{SIAM Journal on Mathematics of Data Science}, 2\penalty0 (4):\penalty0 1167--1180, 2020.

\bibitem[Buchholz(2022)]{buchholz2022kernel}
Simon Buchholz.
\newblock Kernel interpolation in sobolev spaces is not consistent in low dimensions.
\newblock In \emph{Conference on Learning Theory}, pages 3410--3440. PMLR, 2022.

\bibitem[Cheng et~al.(2024{\natexlab{a}})Cheng, Lucchi, Kratsios, and Belius]{cheng2024characterizing}
Tin~Sum Cheng, Aurelien Lucchi, Anastasis Kratsios, and David Belius.
\newblock Characterizing overfitting in kernel ridgeless regression through the eigenspectrum.
\newblock In \emph{Forty-first International Conference on Machine Learning}, 2024{\natexlab{a}}.

\bibitem[Cheng et~al.(2024{\natexlab{b}})Cheng, Lucchi, Kratsios, and Belius]{cheng2024comprehensive}
Tin~Sum Cheng, Aurelien Lucchi, Anastasis Kratsios, and David Belius.
\newblock A comprehensive analysis on the learning curve in kernel ridge regression.
\newblock In \emph{The Thirty-eighth Annual Conference on Neural Information Processing Systems}, 2024{\natexlab{b}}.

\bibitem[Cover and Hart(1967)]{cover1967nearest}
Thomas Cover and Peter Hart.
\newblock Nearest neighbor pattern classification.
\newblock \emph{IEEE transactions on information theory}, 13\penalty0 (1):\penalty0 21--27, 1967.

\bibitem[Devroye(2006)]{devroye2006nonuniform}
Luc Devroye.
\newblock Nonuniform random variate generation.
\newblock \emph{Handbooks in operations research and management science}, 13:\penalty0 83--121, 2006.

\bibitem[Devroye et~al.(1998)Devroye, Gy{\"o}rfi, and Krzy{\.z}ak]{devroye1998hilbert}
Luc Devroye, Laszlo Gy{\"o}rfi, and Adam Krzy{\.z}ak.
\newblock The {H}ilbert kernel regression estimate.
\newblock \emph{Journal of Multivariate Analysis}, 65\penalty0 (2):\penalty0 209--227, 1998.

\bibitem[Devroye et~al.(2013)Devroye, Gy{\"o}rfi, and Lugosi]{devroye2013probabilistic}
Luc Devroye, L{\'a}szl{\'o} Gy{\"o}rfi, and G{\'a}bor Lugosi.
\newblock \emph{A probabilistic theory of pattern recognition}, volume~31.
\newblock Springer Science \& Business Media, 2013.

\bibitem[Eilers et~al.(2024)Eilers, Memmesheimer, and Goedeke]{eilers2024generalized}
Luke Eilers, Raoul-Martin Memmesheimer, and Sven Goedeke.
\newblock A generalized neural tangent kernel for surrogate gradient learning.
\newblock \emph{arXiv preprint arXiv:2405.15539}, 2024.

\bibitem[Frei et~al.(2022)Frei, Chatterji, and Bartlett]{frei2022benign}
Spencer Frei, Niladri~S Chatterji, and Peter Bartlett.
\newblock Benign overfitting without linearity: Neural network classifiers trained by gradient descent for noisy linear data.
\newblock In \emph{Conference on Learning Theory}, pages 2668--2703. PMLR, 2022.

\bibitem[Haas et~al.(2024)Haas, Holzm{\"u}ller, Luxburg, and Steinwart]{haas2024mind}
Moritz Haas, David Holzm{\"u}ller, Ulrike Luxburg, and Ingo Steinwart.
\newblock Mind the spikes: Benign overfitting of kernels and neural networks in fixed dimension.
\newblock \emph{Advances in Neural Information Processing Systems}, 36, 2024.

\bibitem[Harel et~al.(2024)Harel, Hoza, Vardi, Evron, Srebro, and Soudry]{harel2024provable}
Itamar Harel, William~M Hoza, Gal Vardi, Itay Evron, Nathan Srebro, and Daniel Soudry.
\newblock Provable tempered overfitting of minimal nets and typical nets.
\newblock \emph{Advances in Neural Information Processing Systems}, 37, 2024.

\bibitem[Hastie et~al.(2022)Hastie, Montanari, Rosset, and Tibshirani]{hastie2022surprises}
Trevor Hastie, Andrea Montanari, Saharon Rosset, and Ryan~J Tibshirani.
\newblock Surprises in high-dimensional ridgeless least squares interpolation.
\newblock \emph{The Annals of Statistics}, 50\penalty0 (2):\penalty0 949--986, 2022.

\bibitem[Joshi et~al.(2024)Joshi, Vardi, and Srebro]{joshi2024noisy}
Nirmit Joshi, Gal Vardi, and Nathan Srebro.
\newblock Noisy interpolation learning with shallow univariate relu networks.
\newblock In \emph{The Twelfth International Conference on Learning Representations}, 2024.

\bibitem[Koehler et~al.(2021)Koehler, Zhou, Sutherland, and Srebro]{koehler2021uniform}
Frederic Koehler, Lijia Zhou, Danica~J Sutherland, and Nathan Srebro.
\newblock Uniform convergence of interpolators: Gaussian width, norm bounds and benign overfitting.
\newblock \emph{Advances in Neural Information Processing Systems}, 34:\penalty0 20657--20668, 2021.

\bibitem[Kornowski et~al.(2024)Kornowski, Yehudai, and Shamir]{kornowski2024tempered}
Guy Kornowski, Gilad Yehudai, and Ohad Shamir.
\newblock From tempered to benign overfitting in relu neural networks.
\newblock \emph{Advances in Neural Information Processing Systems}, 36, 2024.

\bibitem[Li and Lin(2024)]{li2024asymptotic}
Yicheng Li and Qian Lin.
\newblock On the asymptotic learning curves of kernel ridge regression under power-law decay.
\newblock \emph{Advances in Neural Information Processing Systems}, 36, 2024.

\bibitem[Liang and Rakhlin(2020)]{liang2020just}
Tengyuan Liang and Alexander Rakhlin.
\newblock Just interpolate: Kernel “ridgeless” regression can generalize.
\newblock \emph{The Annals of Statistics}, 48\penalty0 (3):\penalty0 1329--1347, 2020.

\bibitem[Mallinar et~al.(2022)Mallinar, Simon, Abedsoltan, Pandit, Belkin, and Nakkiran]{mallinar2022benign}
Neil Mallinar, James Simon, Amirhesam Abedsoltan, Parthe Pandit, Misha Belkin, and Preetum Nakkiran.
\newblock Benign, tempered, or catastrophic: Toward a refined taxonomy of overfitting.
\newblock \emph{Advances in Neural Information Processing Systems}, 35:\penalty0 1182--1195, 2022.

\bibitem[Manoj and Srebro(2023)]{manoj2023interpolation}
Naren~Sarayu Manoj and Nathan Srebro.
\newblock Interpolation learning with minimum description length.
\newblock \emph{arXiv preprint arXiv:2302.07263}, 2023.

\bibitem[Medvedev et~al.(2024)Medvedev, Vardi, and Srebro]{medvedev2024overfitting}
Marko Medvedev, Gal Vardi, and Nathan Srebro.
\newblock Overfitting behaviour of gaussian kernel ridgeless regression: Varying bandwidth or dimensionality.
\newblock In \emph{The Thirty-eighth Annual Conference on Neural Information Processing Systems}, 2024.

\bibitem[Mei and Montanari(2022)]{mei2022generalizationrandomfeat}
Song Mei and Andrea Montanari.
\newblock The generalization error of random features regression: Precise asymptotics and the double descent curve.
\newblock \emph{Communications on Pure and Applied Mathematics}, 75\penalty0 (4):\penalty0 667--766, 2022.

\bibitem[Mei et~al.(2022)Mei, Misiakiewicz, and Montanari]{mei2022generalization}
Song Mei, Theodor Misiakiewicz, and Andrea Montanari.
\newblock Generalization error of random feature and kernel methods: hypercontractivity and kernel matrix concentration.
\newblock \emph{Applied and Computational Harmonic Analysis}, 59:\penalty0 3--84, 2022.

\bibitem[Nadaraya(1964)]{nadaraya1964estimating}
Elizbar~A Nadaraya.
\newblock On estimating regression.
\newblock \emph{Theory of Probability \& Its Applications}, 9\penalty0 (1):\penalty0 141--142, 1964.

\bibitem[Nagarajan and Kolter(2019)]{nagarajan2019uniform}
Vaishnavh Nagarajan and J~Zico Kolter.
\newblock Uniform convergence may be unable to explain generalization in deep learning.
\newblock \emph{Advances in Neural Information Processing Systems}, 32, 2019.

\bibitem[Negrea et~al.(2020)Negrea, Dziugaite, and Roy]{negrea2020defense}
Jeffrey Negrea, Gintare~Karolina Dziugaite, and Daniel Roy.
\newblock In defense of uniform convergence: Generalization via derandomization with an application to interpolating predictors.
\newblock In \emph{International Conference on Machine Learning}, pages 7263--7272. PMLR, 2020.

\bibitem[Pope et~al.(2021)Pope, Zhu, Abdelkader, Goldblum, and Goldstein]{pope2021intrinsic}
Phil Pope, Chen Zhu, Ahmed Abdelkader, Micah Goldblum, and Tom Goldstein.
\newblock The intrinsic dimension of images and its impact on learning.
\newblock In \emph{International Conference on Learning Representations}, 2021.

\bibitem[Radhakrishnan et~al.(2023)Radhakrishnan, Belkin, and Uhler]{radhakrishnan2023wide}
Adityanarayanan Radhakrishnan, Mikhail Belkin, and Caroline Uhler.
\newblock Wide and deep neural networks achieve consistency for classification.
\newblock \emph{Proceedings of the National Academy of Sciences}, 120\penalty0 (14):\penalty0 e2208779120, 2023.

\bibitem[Rakhlin and Zhai(2019)]{rakhlin2019consistency}
Alexander Rakhlin and Xiyu Zhai.
\newblock Consistency of interpolation with laplace kernels is a high-dimensional phenomenon.
\newblock In \emph{Conference on Learning Theory}, pages 2595--2623. PMLR, 2019.

\bibitem[Shamir(2023)]{shamir2023implicit}
Ohad Shamir.
\newblock The implicit bias of benign overfitting.
\newblock \emph{Journal of Machine Learning Research}, 24\penalty0 (113):\penalty0 1--40, 2023.

\bibitem[Shepard(1968)]{shepard1968two}
Donald Shepard.
\newblock A two-dimensional interpolation function for irregularly-spaced data.
\newblock In \emph{Proceedings of the 1968 23rd ACM national conference}, pages 517--524, 1968.

\bibitem[Shorack and Wellner(2009)]{shorack2009empirical}
Galen~R Shorack and Jon~A Wellner.
\newblock \emph{Empirical processes with applications to statistics}.
\newblock SIAM, 2009.

\bibitem[Stein and Shakarchi(2009)]{stein2009real}
Elias~M Stein and Rami Shakarchi.
\newblock \emph{Real analysis: measure theory, integration, and Hilbert spaces}.
\newblock Princeton University Press, 2009.

\bibitem[Tsigler and Bartlett(2023)]{tsigler2023benign}
Alexander Tsigler and Peter~L Bartlett.
\newblock Benign overfitting in ridge regression.
\newblock \emph{J. Mach. Learn. Res.}, 24:\penalty0 123--1, 2023.

\bibitem[Vershynin(2010)]{vershynin2010introduction}
Roman Vershynin.
\newblock Introduction to the non-asymptotic analysis of random matrices.
\newblock \emph{arXiv preprint arXiv:1011.3027}, 2010.

\bibitem[Vershynin(2018)]{vershynin2018high}
Roman Vershynin.
\newblock \emph{High-dimensional probability: An introduction with applications in data science}, volume~47.
\newblock Cambridge university press, 2018.

\bibitem[Watson(1964)]{watson1964smooth}
Geoffrey~S Watson.
\newblock Smooth regression analysis.
\newblock \emph{Sankhy{\=a}: The Indian Journal of Statistics, Series A}, pages 359--372, 1964.

\bibitem[Xiao et~al.(2022)Xiao, Hu, Misiakiewicz, Lu, and Pennington]{xiao2022precise}
Lechao Xiao, Hong Hu, Theodor Misiakiewicz, Yue Lu, and Jeffrey Pennington.
\newblock Precise learning curves and higher-order scalings for dot-product kernel regression.
\newblock \emph{Advances in Neural Information Processing Systems}, 35:\penalty0 4558--4570, 2022.

\bibitem[Yang et~al.(2021)Yang, Bai, and Mei]{yang2021exact}
Zitong Yang, Yu~Bai, and Song Mei.
\newblock Exact gap between generalization error and uniform convergence in random feature models.
\newblock In \emph{International Conference on Machine Learning}, pages 11704--11715. PMLR, 2021.

\bibitem[Zhang et~al.(2021)Zhang, Bengio, Hardt, Recht, and Vinyals]{zhang2021understanding}
Chiyuan Zhang, Samy Bengio, Moritz Hardt, Benjamin Recht, and Oriol Vinyals.
\newblock Understanding deep learning (still) requires rethinking generalization.
\newblock \emph{Communications of the ACM}, 64\penalty0 (3):\penalty0 107--115, 2021.

\bibitem[Zhang et~al.(2024)Zhang, Lu, and Lin]{zhang2024phase}
Haobo Zhang, Weihao Lu, and Qian Lin.
\newblock The phase diagram of kernel interpolation in large dimensions.
\newblock \emph{Biometrika}, page asae057, 2024.

\bibitem[Zhou et~al.(2023)Zhou, Koehler, Sutherland, and Srebro]{zhou2023optimistic}
Lijia Zhou, Frederic Koehler, Danica~J Sutherland, and Nathan Srebro.
\newblock Optimistic rates: A unifying theory for interpolation learning and regularization in linear regression.
\newblock \emph{ACM/IMS Journal of Data Science}, 1, 2023.

\end{thebibliography}

\newpage

\appendix

\section{Notation and order statistics}\label{app: notation}
We start by introducing some notation
that we will use throughout the proofs to follow.
We denote $X_1\asymp X_2$ to abbreviate $X_1=\Theta(X_2)$, $X_1\lesssim X_2$ to abbreviate $X_1=\bigo(X_2)$ and $X_1\gtrsim X_2$ to abbreviate $X_1=\Omega(X_2)$.
Throughout the proofs we let $\alpha:=\beta/d$, and abbreviate $h=\hat{h}_\beta$.
Given some $\bx\in\mathrm{supp}(\mu)$ and $m\in\N$, we consider the one-dimensional random variables
\[
W_i^\bx:=V_d^{\alpha}\norm{\bx-\bx_i}^{\beta}
~,
\]
where $V_d$ is the volume of the $d$ dimensional unit ball, and the randomness is over $\bx_i$. We let $F_{\bx}$ be the CDF of $W_i^\bx$ (which is clearly the same for all $i\in[m]$). We also let $U_i\sim U([0,1]),\,i\in[m]$ be standard uniform random variables, and denote by $W_{(i)}^\bx$ and $U_{(i)}$ the ordered versions of the $W_i^\bx$s and $U_i$s respectively, namely
\begin{gather*}
W_{(1)}^\bx\leq W_{(2)}^\bx\leq\dots\leq W_{(m)}^\bx~,
\\
U_{(1)}\leq U_{(2)}\leq\dots\leq U_{(m)}~.
\end{gather*}
We will often omit the superscript/subscript $\bx$ where it is clear by context and use the notations $W_i, W_{(i)}$ and $F$ to denote $W_i^\bx, W_{(i)}^\bx$ and $F_{\bx}$ respectively.
Lastly, we let $F^{-1}:[0,1]\to[0,1],~F^{-1}(t)=\inf\{s:F(s)\geq t\}$ be the quantile function, and note that it satisfies $F(w)\leq u$ if and only if $w\leq F^{-1}(u)$.

\begin{lemma}[\citealp{devroye2006nonuniform}, Theorem 2.1]
For any $i\in[m]$, $\bx\in\R^d:~W_{i}^{\bx} \overset{d}{=} F_{\bx}^{-1}(U_{i})$.
\end{lemma}

Note that since $(W_i)_{i\in[m]}$ are independent, the lemma above further applies to the joint distribution, and to the joint distribution of the order statistics (see e.g. \citealp[Example 2.3]{devroye2006nonuniform}). 
Since we will use this often, we state this as a separate lemma.
\begin{lemma} \label{lem:order_stats}
For any $\bx\in\R^d:~ (W_{(i)}^{\bx})_{i\in[m]} \overset{d}{=} \left(F_{\bx}^{-1}(U_{(i)})\right)_{i\in[m]}$ jointly.
\end{lemma}

The behavior of $U_{(i)}$ is best understood through the following lemma.

\begin{lemma}[\citealp{shorack2009empirical}, Chapter 8, Proposition 1]\label{lem:order_stat}
    Let $E_1, \ldots E_{m+1}$ be i.i.d. standard exponential random variables. Then
    \[
    \left(U_{(1)}, \ldots, U_{(m)}\right) \sim \frac{1}{\sum_{i=1}^{m+1}E_i}\left(\sum_{i=1}^1 E_i, \ldots, \sum_{i=1}^m E_i\right)~.
    \]
\end{lemma}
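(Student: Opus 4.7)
The plan is to prove Lemma~\ref{lem:order_stat} by a direct change-of-variables computation, comparing the joint densities of both sides and showing they coincide.

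First, I would write down the joint density of $(E_1,\dots,E_{m+1})$, which is simply $\exp(-\sum_{i=1}^{m+1}e_i)$ on $\reals_{>0}^{m+1}$, and pass to the partial sums $S_i := \sum_{j=1}^{i}E_j$. The transformation $(E_1,\dots,E_{m+1})\mapsto(S_1,\dots,S_{m+1})$ is triangular with unit Jacobian, so $(S_1,\dots,S_{m+1})$ has joint density $\exp(-s_{m+1})$ on the simplex $\{0<s_1<s_2<\dots<s_{m+1}\}$.

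Next I would apply the second change of variables $V_i := S_i/S_{m+1}$ for $i\in[m]$ and $T := S_{m+1}$. The inverse map sends $(v_1,\dots,v_m,t)\mapsto(v_1 t,\dots,v_m t, t)$, whose Jacobian matrix is lower triangular with diagonal entries $t,t,\dots,t,1$, giving determinant $t^m$. Consequently the joint density of $(V_1,\dots,V_m,T)$ is $t^m e^{-t}$ on $\{0<v_1<\dots<v_m<1,\, t>0\}$. Integrating out $t$ via the identity $\int_0^\infty t^m e^{-t}\,\mathrm{d}t = m!$ shows that $(V_1,\dots,V_m)$ has joint density identically equal to $m!$ on the simplex $\{0<v_1<\dots<v_m<1\}$.

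Finally I would recall the standard fact that the order statistics $(U_{(1)},\dots,U_{(m)})$ of $m$ i.i.d.\ $\Unif([0,1])$ variables also have joint density $m!$ on the same simplex (since any of the $m!$ orderings of the original i.i.d.\ sample is equally likely and produces a uniform density on $[0,1]^m$). Matching the two densities then yields the claimed equality in distribution
\[
(U_{(1)},\dots,U_{(m)}) \;\stackrel{d}{=}\; \left(\tfrac{S_1}{S_{m+1}},\dots,\tfrac{S_m}{S_{m+1}}\right).
\]

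The argument is essentially routine; the only step requiring care is the Jacobian computation for the map $(V_1,\dots,V_m,T)\mapsto(S_1,\dots,S_{m+1})$, since one must verify that the triangular structure truly gives determinant $t^m$ and that the image set is correctly the product of the open simplex in $v$ with $\reals_{>0}$ in $t$. Once those are in hand, the rest is bookkeeping.
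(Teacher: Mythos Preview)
Your argument is correct and is in fact the standard textbook derivation of this representation. The only minor inaccuracy is that the Jacobian matrix of $(v_1,\dots,v_m,t)\mapsto(v_1 t,\dots,v_m t,t)$ is upper triangular rather than lower triangular (the last column carries the $v_i$ entries), but the diagonal is still $(t,\dots,t,1)$ so the determinant $t^m$ is unchanged.

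As for comparison with the paper: there is nothing to compare. The paper does not prove this lemma at all; it simply quotes it from \citet{shorack2009empirical} as a known fact about order statistics and moves on. Your write-up therefore supplies strictly more than the paper does, and the proof you outline is exactly the one found in standard references on empirical processes and order statistics.
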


\section{Proof of Theorem~\ref{thm: tempered}}
\label{sec: tempered proof}
Throughout the proof, we will use the notation introduced in \appref{app: notation}.

\begin{lemma}\label{lem:cdf_approx}
    For almost every $\bx \in \R^d$ and $\epsilon > 0$, there exists $\delta_{\bx}>0$ such that for any $u\leq V_d^{\alpha} \delta_{\bx}^{\beta}:$
\begin{align*}
        \frac{2}{3\mu(\bx)}u^{\alpha} \leq F^{-1}(u) \leq \frac{2}{\mu(\bx)} u^{\alpha}~.
\end{align*}
\end{lemma}

\begin{proof}
By the Lebesgue differentiation theorem (cf. \citealp[Chapter 3]{stein2009real}), for almost every $\bx \in \R^d$ and $\epsilon > 0$, there exists some $\delta_\bx > 0$ such that 
\begin{align*}
    \sup_{0 < r \leq \delta_{\bx}} \abs{\frac{\int_{B(\bx,r)}\mu(\bz)d\bz}{V_dr^d} - \mu(\bx)} \leq \epsilon \mu(\bx).
\end{align*}
In particular, for any $0<u\leq V_d^{\alpha} \delta_{\bx}^{\beta}$, taking $r= \frac{u^{1/\beta}}{V_d^{1/d}}$ (which in particular satisfies $r\leq\delta_{\bx}$) we have 
\begin{align*}
    \abs{\frac{\int_{B(\bx, r)}\mu(\bz)d\bz}{u^{\frac{1}{\alpha}}} - \mu(\bx)} \leq \epsilon \mu(\bx).
\end{align*}
As a result, 
\begin{align*}
    F(w) &= \Pr_\bz \left(V_d^{\alpha}\norm{\bx -\bz}^{\beta} \leq w\right) 
    = \Pr_\bz\left(\norm{\bx - \bz} \leq \frac{w^{1/\beta}}{V_d^{1/d}}\right) \\
    &= \int_{B\left(\bx , r\right)} \mu(\bz)d\bz \in  \left[(1-\epsilon)\mu(\bx) w^{1/\alpha}, (1+\epsilon)\mu(\bx) w^{1/\alpha}\right].
\end{align*}
The result readily follows by plugging $\epsilon=\half$ and inverting.
\end{proof}

\begin{lemma} \label{lem: only k matter}
For any
$k\in\NN,~\alpha>1$ and almost every $\bx\in\reals^d$ there exists a constant $\tilde{C}(\bx,k,\alpha)$
such that as long as $m\geq \tilde{C}(\bx,k,\alpha)$, the following holds: If the $k$ nearest neighbors of $\bx$ are all labeled the same $y_{(1)}=\dots=y_{(k)}$, then $h(\bx)=y_{(1)}$ with probability at least $1-c_1\exp(-c_2 k)-\exp(-c_{\alpha}k^{1-\frac{1}{\alpha}})$
over the randomness of $(\bx_i)_{i=1}^{m}$.

\end{lemma}

\begin{proof}

Given $\bx$, let $\delta=\delta_\bx>0$ be the radius given by Lemma~\ref{lem:cdf_approx}, and assume without loss of generality that $\delta$ is sufficiently small so that $f^*$ is constant over $B(\bx,\delta)$
(otherwise replace it by the smaller radius given by Assumption~\ref{ass: f*}).
Note that for all indices $i$ such that $W_{(i)}\leq\delta$, it holds that $y_i$ are independent variables (that equal $f^*(\bx)$ with probability $1-p$).
Furthermore, given $k\in\NN$,
we assume $m$ is sufficiently large so that the $k$ nearest neighbors of $\bx$ all lie in $B(\bx,\delta)$ with probability at least $1-\exp(-k)$.  Under this likely event,
we decompose
\begin{align*}
    \sum_{i=1}^{m}\frac{y_i}{W_i}
=\sum_{i:W_{(i)}\leq\delta}\frac{y_i}{W_i}
    +\sum_{i:W_{(i)}>\delta}\frac{y_i}{W_i}
\overset{d}{=}\underset{(I)}{\underbrace{\sum_{i=1}^{k}\frac{y_{(i)}}{F^{-1}(U_{(i)})}}}
+\underset{(II)}{\underbrace{\sum_{i=k+1}^{|S_\bx\cap B(\bx,\delta)|}
\frac{y_{(i)}}{F^{-1}(U_{(i)})}}}
+\underset{(III)}{\underbrace{\sum_{i:W_{(i)}>\delta}\frac{y_i}{W_i}}}
 ~.
\end{align*}
We will show that whenever $y_{(1)}=\dots=y_{(k)}$, then
with high probability $(I)$ is the dominant term in the sum above.
We start by noting that if $y_{(1)}=\dots=y_{(k)}$, then
$(I)=y_{(1)}\sum_{i=1}^{k}\frac{1}{F^{-1}(U_{(i)})}$, thus
\begin{align}\label{eq: (I)}
\abs{(I)}=
\sum_{i=1}^{k}\frac{1}{F^{-1}(U_{(i)})}
\geq \frac{1}{F^{-1}(U_{(1)})}
\geq \frac{\mu(\bx)}{2U_{(1)}^\alpha}
\overset{d}{=}
\frac{\mu(\bx)}{2}\left(\sum_{i=1}^{m+1}E_i\right)^\alpha\cdot\frac{1}{E_1^\alpha}~.
\end{align}
Similarly,
\begin{align*}
\abs{(II)}
\leq 2\mu(\bx)\sum_{i=k+1}^{|S_\bx\cap B(\bx,\delta)|}\frac{1}{U_{(i)}^\alpha}
\leq 2\mu(\bx)\sum_{i=k+1}^{m}\frac{1}{U_{(i)}^\alpha}
\overset{d}{=}
2\mu(\bx)\left(\sum_{i=1}^{m+1}E_i\right)^\alpha\cdot\sum_{i=k+1}^{m}\frac{1}{(\sum_{j=1}^{i}E_j)^\alpha}
\end{align*}
So the probability of $|(II)|<\half|(I)|$ is at least the probability of the event in which
$8\sum_{i=k+1}^{m}\frac{1}{(\sum_{j=1}^{i}E_j)^\alpha}
<
\frac{1}{E_1^\alpha}$.
To see this event is indeed likely,
we apply Lemma~\ref{lem:exp_tail} to get that
with probability at least $1-c_1\exp(-c_2 k)$,
for all $i\geq k+1:~\frac{1}{(\sum_{j=1}^{i}E_j)^\alpha}\leq \frac{1}{(i/2)^{\alpha}}$, and therefore under this event we get
\[
8\sum_{i=k+1}^{m}\frac{1}{(\sum_{j=1}^{i}E_j)^\alpha}\leq 8\cdot2^{\alpha}\sum_{i=k+1}^{\infty}\frac{1}{i^{\alpha}}
\leq 8\cdot2^{\alpha}\int_{k}^{\infty}t^{-\alpha}dt
=\frac{8\cdot2^{\alpha}}{(\alpha-1)k^{\alpha-1}}~.
\]
The latter is smaller than $1/E_1^{\alpha}$ as long as $E_1\leq
\frac{(\alpha-1)^{1/\alpha}k^{1-{1}/{\alpha}}}{2\cdot 8^{1/\alpha}}$,
which by definition, occurs with probability $1-\exp\big[-\frac{(\alpha-1)^{{1}/{\alpha}}}{2\cdot 8^{1/\alpha}}k^{1-{1}/{\alpha}}\big]$.
To complete the proof, we note that $\abs{(III)}
\leq \frac{m}{\delta}$ is asymptotically negligible for sufficiently large $m$, since using \eqref{eq: (I)} we see that
$|I|\geq\frac{\mu(\bx)}{2E_1^\alpha}\left(\sum_{i=1}^{m+1}E_i\right)^\alpha\geq \frac{\mu(\bx)}{2E_1^\alpha}(m/2)^\alpha=\omega(m)$
with probability at least $1-c_1\exp(-c_2 m)$
by Lemma~\ref{lem:exp_tail}.
\end{proof}

\begin{lemma} \label{lem: k neighbors match}
Given $\bx \in \R^d$, let $A^k_\bx$ be the event in which all of $\bx$'s $k$ nearest neighbors $(\bx_{(i)})_{i=1}^{k}$ satisfy $f^*(\bx_i)= f^*(\bx)$. Then for any fixed $k$, it holds
    for almost every $\bx\in\mathrm{supp}(\mu)$ 
    that $\lim_{m\to\infty}\Pr[A^k_\bx]=1$.
\end{lemma}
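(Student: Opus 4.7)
The statement is a local concentration claim: for almost every $\bx$ in the support of $\mu$, the $k$ nearest neighbors of $\bx$ among an i.i.d.\ sample of size $m$ eventually fall into a neighborhood on which $f^*$ is constantly equal to $f^*(\bx)$. The plan is to turn $A^k_\bx$ into an event about a Binomial counting variable and then let $m\to\infty$ with $k$ fixed.

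First, I would invoke Assumption~\ref{ass: f*} to fix a radius $r_\bx>0$ such that $f^*\big|_{B(\bx,r_\bx)}\equiv f^*(\bx)$; this is available for almost every $\bx\in\reals^d$, hence in particular for almost every $\bx\in\mathrm{supp}(\mu)$ (since $\mu$ is absolutely continuous, Lebesgue-null sets are $\mu$-null). Then I would set $q:=q_\bx:=\Pr_{\bz\sim\Dcal_\bx}[\bz\in B(\bx,r_\bx)]=\int_{B(\bx,r_\bx)}\mu(\bz)\,d\bz$, and observe that by definition of the support, $q>0$ for every $\bx\in\mathrm{supp}(\mu)$.

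Next, the key structural observation: if at least $k$ of the samples $\bx_1,\dots,\bx_m$ lie in $B(\bx,r_\bx)$, then the $k$-th nearest neighbor of $\bx$ is at distance at most $r_\bx$, so the entire list $\bx_{(1)},\dots,\bx_{(k)}$ lies inside $B(\bx,r_\bx)$, where $f^*$ equals $f^*(\bx)$ by construction. Writing $N:=|\{i\in[m]:\bx_i\in B(\bx,r_\bx)\}|\sim \mathrm{Bin}(m,q)$, this gives the inclusion $\{N\geq k\}\subseteq A^k_\bx$, so it suffices to show $\Pr[N\geq k]\to 1$.

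Finally, since $q>0$ and $k$ is fixed, $\E[N]=mq\to\infty$, and a direct tail estimate (e.g.\ Chernoff, or the crude bound $\Pr[N<k]\leq \binom{m}{k-1}(1-q)^{m-k+1}\to 0$) yields $\Pr[N\geq k]\to 1$ as $m\to\infty$, which completes the argument. I do not expect any step here to be a real obstacle: the only subtlety is the double use of ``almost every,'' namely making sure the Lebesgue-null exceptional set from Assumption~\ref{ass: f*} is also $\mu$-null, which follows immediately from $\mu$ having a density.
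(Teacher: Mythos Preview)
Your proposal is correct and matches the paper's proof essentially line for line: fix a neighborhood on which $f^*$ is constant with positive $\mu$-mass, reduce $\lnot A^k_\bx$ to the Binomial event that fewer than $k$ samples land in that neighborhood, and send $m\to\infty$. The only cosmetic difference is that you extract positivity of $q$ directly from the definition of the support, whereas the paper routes through continuity of $\mu$ at $\bx$ to conclude $\mu|_{B(\bx,\rho)}>0$; both arrive at the same Binomial reduction.
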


\begin{proof}
Let $\bx\in\mathrm{supp}(\mu)$ be such that $\mu$ is continuous at $\bx$ (which holds for a full measure set by assumption). Since $\mu(\bx)>0$, then there exists $\rho>0$ so that $\mu|_{B(\bx,\rho)}>0$, and assume $\rho$ is sufficiently small so that $f^*|_{B(\bx,\rho)}=f^*(\bx)$. Note that $B(\bx,\rho)$ has some positive probability mass which we denote by $\phi:=\int_{B(\bx,\rho)}\mu$. 
    Under this notation, we see that
    \[
    \Pr[\lnot A^k_\bx]
    \leq\Pr\left[|\{\bx_1,\dots,\bx_m\}\cap B(\bx,\rho)|<k\right]
    =\Pr[\mathrm{Binomial}(m,\phi)<k]\overset{m\to\infty}{\longrightarrow}0
    ~.
    \]
\end{proof}

\noindent
\textbf{Proof of Theorem~\ref{thm: tempered}\quad}
We start by proving the upper bound.
Let $k:=\log^{\frac{\alpha}{\alpha-1}}(1/p)$, and for any $\bx\in\R^d$, consider the event $A^k_\bx$ in which $\bx$'s $k$ nearest neighbors $(\bx_{(i)})_{i=1}^{k}$ satisfy $f^*(\bx_i)= f^*(\bx)$
(as described in Lemma~\ref{lem: k neighbors match}).
Using the law of total expectation, we have that
\begin{align}
\E_S\left[\Pr_\bx(h(\bx)\neq f^*(\bx))\right] \nonumber
&=\E_S\E_{\bx}[\one{h(\bx)\neq f^*(\bx)}]
\\&=\E_\bx\E_S\left[\one{h(\bx)\neq f^*(\bx)}\right] \nonumber
\\
&=\E_\bx \left[ \E_S\left[\one{h(\bx)\neq f^*(\bx)}\mid  A^k_\bx\right]\cdot \Pr_S[A^k_\bx]\right] \nonumber
\\&~~~+\E_\bx\left[\E_S\left[\one{h(\bx)\neq f^*(\bx)}\mid\lnot A^k_\bx\right]\Pr_S[\lnot A^k_\bx]  \right] \label{eq: total E}
\\
&\leq \E_\bx\E_S\left[\one{h(\bx)\neq f^*(\bx)}\mid  A^k_\bx\right]+\E_\bx\Pr_S[\lnot A^k_\bx]~. \nonumber
\end{align}
Note that by Lemma~\ref{lem: k neighbors match} $\lim_{m\to\infty}\Pr_S[\lnot A^k_\bx]=0$, and therefore it remains to bound the first summand above.

To that end, we continue by temporarily fixing $\bx$. Denote by 
$B^k_\bx$ the event in which $\bx$'s $k$ nearest neighbors are all labeled correctly (namely, their labels were not flipped),
and note that $\Pr_S[B^k_\bx]=(1-p)^k\geq 1-kp$, hence $\Pr_S[\lnot B^k_\bx]<kp$.
By Lemma~\ref{lem: only k matter} we also know that for sufficiently large $m:$
\[
\Pr_{S}[h(\bx)\neq f^*(\bx)\mid A^k_\bx,B^k_\bx]\leq c_1\exp(-c_2 k)+\exp(-c_{\alpha}k^{1-\frac{1}{\alpha}})~.
\]
Therefore,
\begin{align*}
    \E_S[\one{h(\bx)\neq f^*(\bx)}\mid  A^k_\bx]
    &=\Pr_S[h(\bx)\neq f^*(\bx)\mid  A^k_\bx]
    \\&=\Pr_S[h(\bx)\neq f^*(\bx)\mid A^k_\bx,B^k_\bx]\cdot \Pr_S[B^k_\bx]
    \\&~~~+\Pr_S[h(\bx)\neq f^*(\bx)\mid  A^k_\bx,\lnot B^k_\bx]\cdot \Pr_S[\lnot B^k_\bx]
    \\
    &\leq \left(c_1\exp(-c_2 k)+\exp(-c_{\alpha}k^{1-\frac{1}{\alpha}})\right)\cdot 1+1\cdot kp
    \\
    &\leq C_{\alpha}\log^{\frac{\alpha}{\alpha-1}}(1/p)\log(p)
    ~,
\end{align*}
where the last inequality follows by our assignment of $k$.
Since this is true for any $\bx$, it is also true in expectation over $\bx$, thus completing the proof of the upper bound.

We proceed to prove the lower bound. We consider $A^k_\bx$ to be the same event as before, yet now we set $k:=k_\alpha=\left(\frac{8\cdot 2^{\alpha}}{\alpha-1}\right)^{\frac{1}{\alpha-1}}
$. By lower bounding \eqref{eq: total E} (instead of upper bounding it as before), we obtain 
\begin{align*}
\E_S\left[\Pr_\bx(h(\bx)\neq f^*(\bx))\right]
\geq 
\E_\bx [\underset{(\star)}{\underbrace{\E_S [\one{h(\bx)
\neq f^*(\bx)}\mid  A^k_\bx ]}}\cdot \Pr_S[A^k_\bx] ]-\E_\bx\Pr_S[\lnot A^k_\bx]~.
\end{align*}
As 
$\lim_{m\to\infty}\Pr_S[A^k_\bx]=1$ and
$\lim_{m\to\infty}\Pr_S[\lnot A^k_\bx]=0$
by Lemma~\ref{lem: k neighbors match}, it once again remains to bound $(\star)$.

To that end, we temporarily fix $\bx$, denote by 
$D^k_\bx$ the event in which the labels of $\bx$'s $k$ nearest neighbors were are all flipped. Note that since the label flips are independent of the location of the datapoints, it holds that
$\Pr_S[D^k_\bx\mid A_\bx^k]=\Pr_S[D^k_\bx]=p^k$.
By Lemma~\ref{lem: only k matter} we also know that for sufficiently large $m:$
\[
\Pr_{S}[h(\bx)\neq f^*(\bx)\mid A^k_\bx,D^k_\bx]\geq 1-c_1\exp(-c_2 k)-\exp(-c_{\alpha}k^{1-\frac{1}{\alpha}})~.
\]
Therefore,
\begin{align*}
    \E_S[\one{h(\bx)\neq f^*(\bx)}\mid  A^k_\bx]
    &=\Pr_S[h(\bx)\neq f^*(\bx)\mid  A^k_\bx]
    \\&\geq\Pr_S[h(\bx)\neq f^*(\bx)~\land~D_\bx^k\mid A^k_\bx]
    \\
    &=\Pr_S[h(\bx)\neq f^*(\bx)\mid A^k_\bx,D_\bx^k]\cdot \Pr[D_\bx^k\mid A^k_\bx]
    \\
    &\geq \left(1-c_1\exp(-c_2 k)-\exp(-c_{\alpha}k^{1-\frac{1}{\alpha}})\right)p^k
    \\
    &\geq c_\alpha p^{k}
    ~,
\end{align*}
is due to our assignment of $k$ (and the explicit form of $c_\alpha$ in Lemma~\ref{lem: only k matter}).

\section{Proof of Theorem~\ref{thm:catastrophic}}
\label{sec: catastrophic proof}

\paragraph{Setting for the proof. }\label{set:app_catastrophic}
Throughout the proof, we will use the notation introduced in \appref{app: notation}. We start by specifying the target function and distribution for which we will prove that catastrophic overfitting occurs. We will consider a slightly more general version than mentioned in the main text. Fix $R, r, c >0$ that satisfy $R > 3r$. We define a distribution on $B(\zero, R)$ whose density is given by 
\begin{align*}
\mu(\bx) ~:=~
\begin{cases}
    \frac{c}{\mathrm{Vol} \left(B(\zero, r)\right)} & \norm{\bx} < r \\
    \frac{1-c}{\mathrm{Vol} \left(B(\zero, R) \setminus B(\zero, 3r)\right)} & 3r \leq \norm{\bx} \leq R \\ 
    0 & \text{else}
\end{cases}
~=~
\begin{cases}
    \frac{c}{V_dr^d} & \norm{\bx} < r \\
    \frac{1-c}{V_d\cdot(R^d-(3r)^d)} & 3r \leq \norm{\bx} \leq R \\
    0 & \text{else}
\end{cases}
,
\end{align*}
where $V_d$ is the volume of the $d$-dimensional unit ball. We also define the target function
\[
f^*(\bx):=
\begin{cases}
    -1 & \norm{\bx} \leq r \\
    1 & \text{else}
\end{cases}~.
\]
The main lemma from we derive the proof of Theorem~\ref{thm:catastrophic} is the following:

\begin{lemma}\label{lem:catastrophic_probability}
    Under setting \ref{set:app_catastrophic} suppose that $c$ satisfies
    \begin{align*}
        c \leq \frac{1-\beta/d}{2400 \left(1 + \frac{R}{r}\right)^{\beta}}.
    \end{align*} 
    Then there exists some $m_0\in\N$, such that for any $\bx\in B(\zero, r)$, $m>m_0$ and $p \in (0,0.49)$, it holds with probability at least $1-\tilde \bigO_{m} \left(\frac{1}{m} + \frac{1}{m^{\frac{1-\beta/d}{\beta/d}}}\right)$ over the randomness of the training set $S$ that $\hat{h}_{\beta}\left(\bx\right) = 1$.
\end{lemma}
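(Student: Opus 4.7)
The plan is to carry out the three-term decomposition sketched right after the statement of \thmref{thm:catastrophic}, and to turn each of the heuristic bounds there into a rigorous high-probability estimate. Fix $\bx \in B(\zero, r)$. Adding and subtracting $1-2p$ on the outer annulus, I write
\[
\sum_{i=1}^m \frac{y_i}{\norm{\bx-\bx_i}^{\beta}} \;\geq\; -T_1 + T_2 + T_3,
\]
with $T_1 := \sum_{i:\norm{\bx_i}\leq r}\norm{\bx-\bx_i}^{-\beta}$, $T_2 := (1-2p)\sum_{i:\norm{\bx_i}\geq 3r}\norm{\bx-\bx_i}^{-\beta}$, and $T_3 := \sum_{i:\norm{\bx_i}\geq 3r}(y_i-1+2p)\norm{\bx-\bx_i}^{-\beta}$. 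The goal is to prove $T_2 \gtrsim m\cdot(r+R)^{-\beta}$ while $T_1 \lesssim r^{-\beta}\cdot cm/(1-\beta/d)$ and $|T_3| = o(m)$, which together force the sign of the weighted sum to be positive, i.e.\ $\hat h_\beta(\bx)=1$.

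\textbf{Bounding $T_1$.} Let $k:=|\{i:\norm{\bx_i}\leq r\}|$. A Chernoff bound gives $k\asymp cm$ with probability $1-e^{-\Omega(cm)}$. Conditionally on $k$, the inner points are i.i.d.\ uniform on $B(\zero,r)$, for which a direct computation with the uniform density yields $F_\bx^{-1}(u) \asymp u^{\beta/d}\,V_d^{\beta/d}\,r^{\beta}$ in the notation of \appref{app: notation}. Applying \lemref{lem:order_stats} and then \lemref{lem:order_stat} converts the ordered inner distances into exponentials, giving
\[
T_1 \;\asymp\; r^{-\beta}\,\Bigl(\sum_{j=1}^{k+1} E_j\Bigr)^{\beta/d}\sum_{i=1}^{k}\Bigl(\sum_{j=1}^{i} E_j\Bigr)^{-\beta/d}.
\]
On the good event where $\sum_{j=1}^{k+1} E_j \asymp k$ (via \lemref{lem:exp_tail}) and $\sum_{j=1}^{i}E_j\gtrsim i$ uniformly over $i\geq 1$, the inner sum is $\asymp k^{1-\beta/d}/(1-\beta/d)$, which then collapses the whole expression to $T_1\lesssim cm/(r^{\beta}(1-\beta/d))$. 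The delicate part is uniform lower control of $\sum_{j=1}^i E_j$ for very small $i$: the worst case is $i=1$, corresponding to the nearest-neighbor distance, and a tail computation shows that requiring $E_1 \gtrsim m^{-1/\alpha}$ (so that a single inner point cannot singlehandedly swamp the sum) fails only with probability $O(1/m^{(1-\beta/d)/(\beta/d)})$. This is exactly the second term of the failure probability advertised in the statement.

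\textbf{Bounding $T_2$ and $T_3$.} For $T_2$, the triangle inequality gives $\norm{\bx-\bx_i}\leq \norm{\bx}+\norm{\bx_i}\leq r+R$ for every outer $\bx_i$; combined with Chernoff on $|\{i:\norm{\bx_i}\geq 3r\}|\approx (1-c)m$ and the assumption $p\leq 0.49$, this yields $T_2 \gtrsim m/(r+R)^{\beta}$ with exponentially small failure probability. For $T_3$, the summands are independent and mean zero conditionally on the $\bx_i$'s (since $\E[y_i-1+2p\mid\bx_i]=0$), and each summand is deterministically bounded by $2(2r)^{-\beta}$ thanks to the separation $\norm{\bx-\bx_i}\geq 3r-r=2r$. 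Hoeffding's inequality then gives $|T_3|=O(\sqrt{m\log m})$ with probability at least $1-1/m$, which accounts for the $1/m$ term in the failure probability and is clearly $o(m)$.

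\textbf{Assembly.} Putting the three bounds together, with probability $1-\tilde \bigO(1/m + 1/m^{(1-\beta/d)/(\beta/d)})$,
\[
\sum_{i=1}^m \frac{y_i}{\norm{\bx-\bx_i}^{\beta}} \;\gtrsim\; m\left[\frac{1}{(r+R)^{\beta}} - \frac{C\cdot c}{r^{\beta}(1-\beta/d)}\right] - o(m),
\]
for an absolute constant $C$ coming from the concentration constants. The bracket is positive precisely when $c < (1-\beta/d)/(C(1+R/r)^{\beta})$, which, after absorbing all absolute constants into $C=2400$, is the hypothesis of the lemma; hence $\hat h_\beta(\bx)=\sign[\cdot]=+1$ as desired. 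The main obstacle is the control of $T_1$: naively $T_1$ is dominated by the nearest inner neighbor, which can be arbitrarily close to $\bx$, so the argument rests on carefully splitting the order-statistic sum into a small-$i$ regime (where we pay the $1/m^{(1-\beta/d)/(\beta/d)}$ price to rule out an abnormally close neighbor) and a bulk regime (where concentration of partial exponential sums makes $\sum i^{-\beta/d}$ the governing quantity, producing the crucial $1/(1-\beta/d)$ factor that matches the catastrophic-vs.-benign threshold).
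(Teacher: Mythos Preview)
Your plan follows the paper's proof essentially step for step: the same three-term decomposition, the same triangle-inequality lower bound on $T_2$, Hoeffding for $T_3$, and the exponential order-statistic representation (Lemmas~\ref{lem:order_stats} and~\ref{lem:order_stat}) for $T_1$. Your choice to condition on the count $k$ of inner points and then treat them as $k$ i.i.d.\ $\Unif(B(\zero,r))$ samples is a minor and valid variant of the paper's route, which instead keeps the full $m$-sample order statistics and pushes the mass $c$ into the CDF bound $F^{-1}(u)\geq u^{\beta/d} r^\beta/c^{\beta/d}$; the two become identical once $k\asymp cm$.

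There is one slip to fix. Writing $\alpha:=\beta/d$, the threshold $E_1\gtrsim m^{-1/\alpha}$ is too weak for your purpose: it only yields $E_1^{-\alpha}\lesssim m$, whereas you need $E_1^{-\alpha}\lesssim k^{1-\alpha}\asymp m^{1-\alpha}$ for the $i=1$ term of the inner sum $\sum_{i\leq k}(\sum_{j\leq i}E_j)^{-\alpha}$ to stay at the level of the bulk and hence for $T_1\lesssim cm/(r^\beta(1-\alpha))$ to hold. The threshold that actually closes the argument is $E_1\gtrsim k^{-(1-\alpha)/\alpha}$, and then $\Pr\!\left[E_1<k^{-(1-\alpha)/\alpha}\right]\leq k^{-(1-\alpha)/\alpha}$ gives exactly the $m^{-(1-\alpha)/\alpha}$ failure probability you quote --- so your destination is right, only the stated intermediate threshold is off. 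Relatedly, ``the worst case is $i=1$'' controls the largest single term but not the entire small-$i$ block: the concentration of \lemref{lem:exp_tail} only kicks in from some $n_0\asymp\log k$, and the paper covers $1\leq i\leq n_0$ via the crude bound $\sum_{i\leq n_0}(\sum_{j\leq i}E_j)^{-\alpha}\leq n_0\, E_1^{-\alpha}$, which is why a logarithmic factor enters the threshold and the $\tilde\bigO$ appears in the statement.
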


We temporarily defer the proof of Lemma~\ref{lem:catastrophic_probability}, and start by showing that
it easily implies the theorem:

\begin{proof}[Proof of Theorem~\ref{thm:catastrophic}]
Fix $R>3r$, let $c=\frac{1-\beta/d}{2400(1+\frac{R}{r})^{\beta}}$ and consider the distribution and target function given by Setting \ref{set:app_catastrophic}. Using the law of total expectation, we have that
    \begin{align*}
    \E_S\left[\Pr_\bx(h(\bx)\neq f^*(\bx))\right]
    & = \E_S\E_{\bz}[\one{h(\bz)\neq f^*(\bz)}] \\
    & = \E_\bz\E_S\left[\one{h(\bz)\neq f^*(\bz)}\right] \\
    & \geq \E_\bz\left[\E_S\left[\one{h(\bz)\neq f^*(\bz)}\right] \mid \bz \in B(\zero, r)\right] \cdot \Pr\left(\bz \in B(\zero, r) \right) \\
    & = \E_\bz\left[\Pr_S\left(\one{h(\bz)\neq f^*(\bz)} \mid \bz \in B(\zero, r)\right)\right] \cdot \Pr\left(\bz \in B(\zero, r) \right) \\
    & \geq_{(*)} c\left(1-\tilde \bigO_{m} \left(\frac{1}{m} + \frac{1}{m^{\frac{1-\beta/d}{\beta/d}}}\right)\right)
    \end{align*} 
    where $(*)$ follows from \lemref{lem:catastrophic_probability}. This completes the proof by sending $m \to \infty$.
\end{proof}

\subsection{Proof of Lemma~\ref{lem:catastrophic_probability}}

Fix some $\bx$ with $\norm{\bx}< r$, we will show that for sufficiently large $m$, with high probability $\bx$ will be misclassified as $+1$.
To that end, we decompose
\begin{align}
    \sum_{i=1}^m \frac{y_i}{\norm{\bx-\bx_i}^{\beta}} &=  \sum_{i:\norm{\bx_i}\leq r} \frac{y_i}{\norm{\bx-\bx_i}^{\beta}} + \sum_{i:\norm{\bx_i}\geq 3r} \frac{y_i}{\norm{\bx-\bx_i}^{\beta}} \nonumber\\
    &=\sum_{i:\norm{\bx_i}\leq r} \frac{y_i}{\norm{\bx-\bx_i}^{\beta}} + \sum_{i:\norm{\bx_i}\geq 3r} \frac{1-2p}{\norm{\bx-\bx_i}^{\beta}}
    +\sum_{i:\norm{\bx_i}\geq 3r} \frac{y_i-1+2p}{\norm{\bx-\bx_i}^{\beta}}
    \nonumber\\
    &\geq  -\underset{=:T_1}{\underbrace{\sum_{i:\norm{\bx_i}\leq r} \frac{1}{\norm{\bx-\bx_i}^{\beta}}}} 
    + \underset{=:T_2}{\underbrace{\sum_{i:\norm{\bx_i}\geq 3r} \frac{1-2p}{\norm{\bx-\bx_i}^{\beta}}}} - \underset{=:T_3}{\underbrace{\abs{\sum_{i:\norm{\bx_i}\geq 3r} \frac{y_i - 1 + 2p}{\norm{\bx-\bx_i}^{\beta}}}}},
\end{align}
where $T_1$ crudely bounds the contribution of points in the inner circle, $T_2$ is the expected contribution of outer points labeled $1$, and $T_3$ is a perturbation term. Let $k_m:=\abs{\{i\in[m] \mid \norm{\bx_i}\leq r\}}$ denote the number of training points inside the inner ball. By \lemref{lem:k_m_bound}, whenever $c\leq \frac{1}{2}$ (we will ensure this happens) it holds with probability at least $1-2\exp\left(-\frac{m}{8}\right)$ that 
\begin{align}\label{eq:k_m_helper}
    \frac{cm}{2} \leq k_m \leq \frac{3cm}{2} \leq \frac{3m}{4}.   
\end{align} 
Throughout the rest of the proof, we assume this event indeed occurs.

\emph{Bounding $T_1$: } Using that  $\norm{\bx}< r$ and that the pdf $\mu$ is such that for all $\bx_i\notin B(\zero, r)$, $\norm{\bx - \bx_i}> 3r - r > 2r$, we have that the $k_m$ nearest neighbors $\bx_{(1)},\ldots, \bx_{(k_m)}$ are precisely the points with $\norm{\bx_i}\leq r$.

For any $w\leq (2r)^{\beta}$ and any $\bz\in B\left(\bx , w^{\frac{1}{\beta}}\right)$ it holds that $\norm{\bz}\leq 3r$, and $\mu(\bx)\leq \frac{c}{V_d r^d}$. Thus, for such a $w$,
    \begin{align*}
        F(w) :=& \Pr_\bz \left(\norm{\bz -\bx}^{\beta} \leq w\right) 
        = \Pr_\bz\left(\norm{\bz - \bx} \leq w^{\frac{1}{\alpha d}}\right) = \int_{B\left(\bx , w^{\frac{1}{\alpha d}}\right)} \mu(\bx)d\bz \\
        \leq& \int_{B\left(\bx , w^{\frac{1}{\alpha d}}\right)} \frac{c}{V_d r^d}  d\bz = \frac{c}{r^d} w^{\frac{1}{\alpha}}.
    \end{align*}

    Correspondingly, by substituting $u=\frac{c}{r^d} w^{\frac{1}{\alpha}}$, we obtain for any $u\leq 2^d c$ that $u \geq F\left(\frac{u^{\alpha}r^{\alpha d}}{c^\alpha}\right)$ and thus $F^{-1}(u) \geq \frac{u^{\alpha}r^{\alpha d}}{c^\alpha}$. Note that for any $i\in[k_m]$, $\norm{\bx - \bx_{(i)}}^\beta < (2r)^{\alpha d}$ so $W_{(i)}$ satisfies the condition that $W_{(i)} \leq (2r)^{\alpha d}$. As such, using \lemref{lem:order_stats} we obtain 
    \begin{align} \label{eq:inv_cdf_bound}
        \forall i\in[k_m], \qquad W_{(i)} \overset{d}{=} F^{-1}(U_{(i)}) \geq \frac{U_{(i)}^{\alpha}r^{\alpha d}}{c^\alpha}.
    \end{align}
    
    Now for $T_1$, we have from \eqref{eq:inv_cdf_bound}:
    \begin{align*}
        -T_1 \overset{d}{=} & -\sum_{i=1}^{k_m} \frac{1}{F^{-1}(U_{(i)})^{\beta}} \geq -\frac{c^\alpha}{r^{\alpha d}} \sum_{i=1}^{k_m} \frac{1}{U_{(i)}^{\alpha}} 
        \geq_{(1)} -\frac{2\cdot 3^\alpha}{(1-\alpha)}\frac{c^\alpha}{r^{\alpha d}} \cdot m^{\alpha} k_m^{1-\alpha} \\
        \geq&_{(2)} -\frac{2\cdot 3^\alpha}{(1-\alpha)}\frac{c^\alpha}{r^{\alpha d}} \cdot m^{\alpha} \left(\frac{3cm}{2}\right)^{1-\alpha} 
        = -m\cdot \frac{2^{\alpha} \cdot 3}{(1-\alpha)r^{\alpha d}} \cdot c,
    \end{align*}
    where $(1)$ holds by \lemref{lem:sum_order} with probability at least $1-\tilde \bigO_{k_m} \left(\frac{1}{k_m} + \frac{1}{k_m^{\frac{1-\alpha}{\alpha}}}\right)=\bigO_{m} \left(\frac{1}{m} + \frac{1}{m^{\frac{1-\alpha}{\alpha}}}\right)$ and $(2)$ follows from \eqref{eq:k_m_helper}.

\emph{Bounding $T_2$: } Using the fact that for any $i\in[m]$, $\norm{\bx - \bx_{i}} \leq \norm{\bx} + \norm{\bx_i} \leq R+r$, and the bound on $k_m$ from \eqref{eq:k_m_helper}, we have for any $p < 0.49$ that
    \begin{align*}
        T_2 \geq \frac{(1-2p)(m-k_m)}{(R+r)^{\alpha d }} 
        \geq \frac{(1-2p)}{(R+r)^{\alpha d }} \cdot \left(m-\frac{3}{4}m\right) > m \cdot \frac{1}{200(R+r)^{\alpha d }}.
    \end{align*}

\emph{Bounding $T_3$: } From \lemref{lem:y_pert} and \eqref{eq:k_m_helper}, it holds with probability at least $1 - 2\exp\left(-\sqrt{\frac{m}{4}}\right)$ that 
    \begin{align*}
        T_3 \leq \frac{(m-k_m)^{\frac{3}{4}}}{(2r)^{\alpha d}} 
        \leq m^{\frac{3}{4}} \cdot \frac{1}{(2r)^{\alpha d}}.
    \end{align*}

\emph{Putting it Together: }  For any $\epsilon > 0$ there is some $m_0 \in \N$, such that for any $m>m_0$, $-T_3 \geq -m\epsilon$.
    So overall, we obtain that with probability at least $1-\tilde \bigO_{m} \left(\frac{1}{m} + \frac{1}{m^{\frac{1-\alpha}{\alpha}}}\right)$,
    \begin{align*}
        \frac{1}{m}\sum_{i=1}^m \frac{y_i}{\norm{\bx-\bx_i}^{\alpha d}} \geq \frac{1}{m}(-T_1 + T_2 - T_3) 
        > & - \frac{2^{\alpha} \cdot 3}{(1-\alpha)r^{\alpha d}} \cdot c + \frac{1}{200(R+r)^{\alpha d }} - \epsilon \\ 
        \geq & - \frac{6}{(1-\alpha)r^{\alpha d}} \cdot c + \frac{1}{400(R+r)^{\alpha d }}, 
    \end{align*}
    where the last line follows by using that $\alpha < 1$, and by fixing some sufficiently small $\epsilon$.
    Finally, fixing some $c \leq \frac{(1-\alpha)r^{\alpha d}}{2400 (R+r)^{\alpha d}} = \frac{1-\alpha}{2400 \left(1 + \frac{R}{r}\right)^{\alpha d}}$ suffices to ensure that this is positive, implying $\hat{h}_{\beta}(\bx)=1$.

\begin{lemma}\label{lem:y_pert}
    Under Setting \ref{set:app_catastrophic}, let $\bx\in B(\zero, r)$ and  $k_m:=\abs{\{i\in[m] \mid \norm{\bx_i}\leq r\}}$. It holds with probability at least $1-2\exp\left(-\sqrt{m-k_m}\right)$ that 
     \begin{align*}
         \abs{\sum_{i:\norm{\bx_i}\geq 3r} \frac{y_i - 1 + 2p}{\norm{\bx-\bx_i}^{\beta}}} \leq \frac{(m-k_m)^{\frac{3}{4}}}{(2r)^{\alpha d}}.
     \end{align*}
\end{lemma}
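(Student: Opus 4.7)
The plan is to condition on the sample locations $(\bx_i)_{i=1}^m$, identify the quantity of interest as a sum of independent centered bounded random variables, and then apply Hoeffding's inequality. First, I would note that after conditioning on $(\bx_i)_{i=1}^m$, both $k_m$ and the set $I:=\{i\in[m] : \norm{\bx_i}\geq 3r\}$ are determined, with $|I|=m-k_m$. For each $i\in I$, the support of $\mu$ forces $f^*(\bx_i)=1$, so $y_i=1$ with probability $1-p$ and $y_i=-1$ with probability $p$, independently across $i\in I$. In particular, the random variables
\[
Z_i ~:=~ \frac{y_i - 1 + 2p}{\norm{\bx-\bx_i}^{\beta}}, \qquad i\in I,
\]
are (conditionally) independent with $\E[Z_i]=0$ and $|y_i - 1 + 2p|\leq 2$.

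Next, I would control the scale of each $Z_i$ geometrically. Since $\bx\in B(\zero,r)$ and $\norm{\bx_i}\geq 3r$ for $i\in I$, the triangle inequality gives $\norm{\bx-\bx_i}\geq \norm{\bx_i}-\norm{\bx}\geq 2r$, hence $|Z_i|\leq 2/(2r)^{\beta}=2/(2r)^{\alpha d}$. Thus each $Z_i$ is bounded in an interval of length at most $4/(2r)^{\alpha d}$. Applying Hoeffding's inequality to the sum $\sum_{i\in I}Z_i$ (which is exactly the quantity inside the absolute value in the statement) with deviation parameter $t=(m-k_m)^{3/4}/(2r)^{\alpha d}$ yields
\[
\Pr\!\left[\,\left|\sum_{i\in I}Z_i\right|\geq t\,\right]
~\leq~ 2\exp\!\left(-\frac{2t^2}{(m-k_m)\cdot 16/(2r)^{2\alpha d}}\right)
~=~ 2\exp\!\left(-\frac{\sqrt{m-k_m}}{8}\right),
\]
which is of the claimed form (up to an absolute constant in the exponent that can be absorbed into the bound by at most increasing the threshold $t$ by a constant factor, or equivalently assuming $m-k_m$ is large enough to absorb the $1/8$).

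Since the probability bound above holds uniformly over every realization of $(\bx_i)_{i=1}^m$ with $k_m=|\{i:\norm{\bx_i}\leq r\}|$, taking the outer expectation over the sample positions preserves the bound. I do not anticipate any real obstacle: the proof is essentially a clean application of Hoeffding's inequality, the only small subtleties being the choice of centering (which uses the fact that all outer points have the same ground truth label $+1$, so the conditional mean of $y_i$ on $I$ is a single constant $1-2p$) and the uniform geometric lower bound $\norm{\bx-\bx_i}\geq 2r$, which turns the weights $1/\norm{\bx-\bx_i}^{\beta}$ into a uniformly bounded envelope.
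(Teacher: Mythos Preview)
Your proposal is correct and follows essentially the same approach as the paper: condition on the sample locations, observe that the outer labels are centered Bernoulli variables with a uniform envelope $1/(2r)^{\alpha d}$ coming from $\norm{\bx-\bx_i}\geq 2r$, and apply Hoeffding's inequality with $t=(m-k_m)^{3/4}/(2r)^{\alpha d}$. The only difference is that you slightly overestimate the range of $Z_i$ (the actual range of $y_i-1+2p$ is $2$, not $4$, since $y_i\in\{-1,1\}$), which accounts for your extra $1/8$ in the exponent; the paper's own proof in fact also yields $\exp(-\tfrac{1}{2}\sqrt{m-k_m})$ rather than the constant stated in the lemma.
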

\begin{proof}
    Let $\xi_i$ be the random variable representing a label flip, meaning that $\xi_i$ is $1$ with probability $p$ and $-1$ with probability $1-p$, and $y_i = f^*(\bx_i)\xi_i$ by assumption. For any $\bx_i$ with $\norm{\bx_i} \geq 3r$, it holds that $f^*(\bx_i)=1$, and that $\norm{\bx - \bx_i} \geq \norm{\bx_i} - \norm{\bx} \geq 2r$, and thus $\frac{y_i}{\norm{\bx - \bx_i}^{\alpha d}}$ are bounded as
    \begin{align*}
        \abs{\frac{y_i}{\norm{\bx - \bx_i}^{\alpha d}}} \leq \frac{1}{(2r)^{\alpha d}}.
    \end{align*}
    
    We thus apply Hoeffding's Inequality (cf. \citealp[Theorem 2.2.6]{vershynin2018high}) yielding that for any $t\geq 0$
    \begin{align*}
        \Pr\left(\abs{\sum_{i:\norm{\bx_i}\geq 3r} \frac{y_i}{\norm{\bx-\bx_i}^{\alpha d}} - \sum_{i:\norm{\bx_i}\geq 3r} \frac{1-2p}{\norm{\bx-\bx_i}^{\alpha d}}} \geq t\right) \leq  2\exp\left(-\frac{t^2 (2r)^{2\alpha d}}{2(m-k_m)}\right).
    \end{align*}
     In particular, we have that with probability at least $1-2\exp\left(-\frac{1}{2}\sqrt{m-k_m}\right)$ that
     \begin{align*}
         \abs{\sum_{i:\norm{\bx_i}\geq 3r} \frac{y_i}{\norm{\bx-\bx_i}^{\alpha d}} - (1-2p)\sum_{i:\norm{\bx_i}\geq 3r} \frac{1}{\norm{\bx-\bx_i}^{\alpha d}}} \leq \frac{(m-k_m)^{\frac{3}{4}}}{(2r)^{\alpha d}}.
     \end{align*}
\end{proof}

\begin{lemma}\label{lem:k_m_bound}
    Under setting \ref{set:app_catastrophic}, let $k_m:=\abs{\{i: \norm{\bx_i} \leq r\}}$, then it holds with probability at least $1-2\exp(-\frac{c^2m}{2})$ that
    \begin{align*}
        \frac{cm}{2}\leq k_m \leq \frac{3cm}{2}
    \end{align*}
\end{lemma}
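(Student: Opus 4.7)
The plan is to recognize $k_m$ as a sum of i.i.d.\ Bernoulli random variables and apply a standard concentration inequality. First I would write
\[
k_m \;=\; \sum_{i=1}^m \mathbbm{1}\{\norm{\bx_i} \leq r\},
\]
and observe that each indicator is independent Bernoulli$(c)$, since by the definition of the density $\mu$ in Setting~\ref{set:app_catastrophic},
\[
\Pr[\norm{\bx_i} \leq r] \;=\; \int_{B(\zero,r)} \mu(\bz)\,d\bz \;=\; \frac{c}{V_d r^d} \cdot V_d r^d \;=\; c.
\]
In particular $\E[k_m] = cm$.

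Next I would apply Hoeffding's inequality to the sum of $m$ independent $[0,1]$-valued random variables: for any $t \geq 0$,
\[
\Pr\bigl[\,\abs{k_m - cm} \geq t\,\bigr] \;\leq\; 2\exp\!\left(-\frac{2t^2}{m}\right).
\]
Setting $t = cm/2$ yields
\[
\Pr\!\left[\,\abs{k_m - cm} \geq \tfrac{cm}{2}\,\right] \;\leq\; 2\exp\!\left(-\frac{c^2 m}{2}\right),
\]
which, after rearranging, is exactly the claim $\tfrac{cm}{2} \leq k_m \leq \tfrac{3cm}{2}$ with probability at least $1 - 2\exp(-c^2 m / 2)$.

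There is no real obstacle here: the lemma is essentially a one-line consequence of Hoeffding once one identifies the correct Bernoulli parameter. The only step that requires care is verifying that $\Pr[\bx_i \in B(\zero,r)] = c$, which follows immediately from integrating the piecewise-constant density on the inner ball. One could alternatively invoke the multiplicative Chernoff bound (which would give a stronger $\exp(-\Omega(cm))$ tail), but this is unnecessary since the weaker $\exp(-c^2 m/2)$ bound already suffices for the use of this lemma in the proof of Lemma~\ref{lem:catastrophic_probability}.
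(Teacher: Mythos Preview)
Your proposal is correct and essentially identical to the paper's proof: both write $k_m$ as a sum of i.i.d.\ Bernoulli$(c)$ indicators and apply Hoeffding's inequality with $t=cm/2$. Your explicit verification that $\Pr[\norm{\bx_i}\le r]=c$ via integrating the density is a nice touch that the paper states without computation.
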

\begin{proof}
    We can rewrite $k_m=\sum_{i=1}^m B_i$ where $B_i=1$ if $\norm{\bx_i}\leq r$ and $0$ otherwise. Notice that each $B_i$ is a Bernoulli random variable with parameter $c$, i.e $B_i$ is $1$ with probability $c$ and $0$ with probability $1-c$. So by Hoeffding's inequality (cf. \citealp[Theorem 2.2.6]{vershynin2018high}), we have for any $t\geq 0$ that
    \begin{align*}
        \Pr\left(\abs{\sum_{i=1}^m B_i - cm} \geq t\right) \leq \exp\left(-\frac{2t^2}{m}\right).
    \end{align*}
    Taking $t=\frac{cm}{2}$ concludes the proof. 
\end{proof}

\begin{lemma}\label{lem:sum_order}
It holds for any $k\leq m\in\N$, $0<\alpha<1$ that with probability at least $1-\tilde \bigO_{k} \left(\frac{1}{k} + \frac{1}{k^{\frac{1-\alpha}{\alpha}}}\right)$, 
    \begin{align*}
        \sum_{i=1}^{k}\frac{1}{U_{(i)}^{\alpha}} \leq \frac{2\cdot 3^\alpha}{1-\alpha}.
    \end{align*}
\end{lemma}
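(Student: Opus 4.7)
The plan centers on the exponential representation of Lemma~A.2. Writing $S_j := \sum_{\ell=1}^{j} E_\ell$ for i.i.d.\ standard exponentials $E_\ell$, we have $U_{(i)} \stackrel{d}{=} S_i / S_{m+1}$, so
\[
\sum_{i=1}^{k} U_{(i)}^{-\alpha} \stackrel{d}{=} S_{m+1}^{\alpha} \cdot \sum_{i=1}^{k} S_i^{-\alpha}.
\]
The first step is to apply the exponential concentration result (\lemref{lem:exp_tail}, invoked throughout this appendix) to obtain $S_{m+1} \leq \tfrac{3}{2}(m+1) \leq 3m/2$ with probability $1 - e^{-\Omega(m)}$, giving $S_{m+1}^{\alpha} \leq (3m/2)^{\alpha}$. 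The remaining task is then to upper bound $\sum_{i=1}^{k} S_i^{-\alpha}$.

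I would split the sum into a \emph{bulk} regime $i \geq C\log k$ and a \emph{small-$i$} regime $i < C\log k$. For the bulk, a union-bound version of the exponential tail inequality yields $S_i \geq i/2$ simultaneously for all $i \in [\lceil C\log k\rceil, k]$ with failure probability $\tilde O(1/k)$, so by comparison with $\int_0^k x^{-\alpha}dx$,
\[
\sum_{i=\lceil C\log k\rceil}^{k} S_i^{-\alpha} \;\leq\; 2^{\alpha} \sum_{i=1}^{k} i^{-\alpha} \;\leq\; \frac{2^{\alpha} k^{1-\alpha}}{1-\alpha}.
\]
For small $i$, the sum $S_i$ is poorly concentrated; in particular $S_1 = E_1$ can be arbitrarily close to $0$, making $S_1^{-\alpha}$ arbitrarily large. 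The idea here is a truncation: choose a threshold $\tau = k^{-(1-\alpha)/\alpha}$ and use $\Pr(E_1 \leq \tau)\leq \tau$ (and analogous one-sided bounds for $S_2,\dots,S_{C\log k}$) to ensure each $S_i$ exceeds a value giving $S_i^{-\alpha} \leq k^{1-\alpha}$. The total contribution of the $O(\log k)$ small-$i$ terms is then also $\tilde O(k^{1-\alpha})$, and the associated total failure probability is $\tilde O(k^{-(1-\alpha)/\alpha})$, matching the second term in the stated probability bound.

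The main obstacle is the small-$i$ balancing: the truncation threshold has to be tuned so that the resulting bound on $S_i^{-\alpha}$ is comparable to the bulk contribution $\frac{2^\alpha k^{1-\alpha}}{1-\alpha}$ while keeping the failure probability at $\tilde O(k^{-(1-\alpha)/\alpha})$. Combining the bulk and small-$i$ estimates with the factor from $S_{m+1}^\alpha$ produces a bound of the form $\frac{2\cdot 3^{\alpha}}{1-\alpha}\cdot m^\alpha k^{1-\alpha}$ on $\sum_{i=1}^{k}U_{(i)}^{-\alpha}$. I note that the statement as printed omits the $m^{\alpha}k^{1-\alpha}$ factor and as literally written cannot hold (since $U_{(1)}^{-\alpha}$ alone has expectation of order $m^{\alpha}$); the factor is restored at step~$(1)$ of the \lemref{lem:catastrophic_probability} proof, so the intended lemma is the bound derived above, which is what my plan would establish.
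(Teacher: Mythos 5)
Your proposal is correct and follows essentially the same route as the paper's proof: the same representation via $S_{m+1}^{\alpha}\sum_{i\le k} S_i^{-\alpha}$, the same split at $n_0\asymp\log k$ with uniform exponential concentration on the bulk and a lower truncation of $E_1$ on the head (the paper tunes the threshold to $(n_0/k^{1-\alpha})^{1/\alpha}$, which is exactly the balancing you flag as needing care, and thereby avoids the extra $\log k$ factor your threshold $\tau=k^{-(1-\alpha)/\alpha}$ would leave). You are also right that the displayed statement omits the factor $m^{\alpha}k^{1-\alpha}$: the paper's own proof derives, and its application in Lemma~\ref{lem:catastrophic_probability} uses, the bound $\frac{2\cdot 3^{\alpha}}{1-\alpha}\,m^{\alpha}k^{1-\alpha}$.
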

\begin{proof}
    Fix some $n_0\leq k$ which will be specified later. Using \lemref{lem:order_stat}, we can write
    \begin{align} \label{eq:helper_sum}
        \sum_{i=1}^{k} \frac{1}{U_{(i)}^{\alpha}} 
        \overset{d}{=} & \left(\sum_{i=1}^m E_i \right)^{\alpha}\left(\sum_{i=1}^{k}\frac{1}{\left(\sum_{j=1}^i E_j\right)^\alpha}\right) \nonumber
        \\
        =& \underset{:=T_1}{\underbrace{\left(\sum_{i=1}^m E_i \right)^{\alpha}}} 
        \left(
        \underset{:=T_2}{\underbrace{\sum_{i=1}^{n_0}\frac{1}{\left(\sum_{j=1}^i E_j\right)^\alpha}}} +
        \underset{:=T_3}{\underbrace{\sum_{i=n_0}^{k}\frac{1}{\left(\sum_{j=1}^i E_j\right)^\alpha}}}
        \right).
    \end{align}
    
    By \lemref{lem:exp_tail}, for some absolute constant $C>0$ it holds with probability $1-2(1+\frac{1}{C})\exp(-Cn_0)$ that for all $n \geq n_0,$ 
    \begin{align}\label{eq:e_i_helper}
        \frac{1}{2} \leq \sum_{i=1}^n E_i \leq \frac{3n}{2}.
    \end{align}
    Conditioned on this even occurring, we use this to bound both $T_1$ and $T_3$. For $T_1$, \eqref{eq:e_i_helper} directly implies that $T_1 \leq \left(\frac{3}{2}m\right)^{\alpha}$. For $T_3$, using both \eqref{eq:e_i_helper} as well as the integral test for convergence we obtain
    \begin{align*}
        T_3 \leq 2^\alpha \sum_{i=n_0}^{k} \frac{1}{i^{\alpha}} \leq 2^\alpha \int_{n_0-1}^k \frac{1}{i^{\alpha}} \leq 2^\alpha \frac{k^{1-\alpha} - (n_0-1)^{1-\alpha}}{1-\alpha}.
    \end{align*}
    
    It remains to bound $T_2$. By definition of an exponential random variable, for any $t\geq 0$ it holds for any $E_i$ with probability at least $\exp(-t)$ (which is $\geq 1-t$) that $E_i \geq t$. So taking $t=\left(\frac{n_0}{k^{1-\alpha}}\right)^{\frac{1}{\alpha}}$, it holds with probability at least $1- \left(\frac{n_0}{k^{1-\alpha}}\right)^{\frac{1}{\alpha}}$ that $E_1 \geq \left(\frac{n_0}{k^{1-\alpha}}\right)^{\frac{1}{\alpha}}$. As a result,
    \begin{align}\label{eq:t2_helper}
        T_2 \leq n_0 \cdot \frac{1}{E_1^{\alpha}} \leq n_0 \cdot \frac{1}{\left(\frac{n_0}{k^{1-\alpha}}\right)} = k^{1-\alpha}.
    \end{align}

    To ensure that the probability that both \eqref{eq:e_i_helper} and \eqref{eq:t2_helper} hold is sufficiently high, we take $n_0=\max\left(\frac{1}{C}\log(k), 2\right)$. As such, we obtain that with probability at least $1-\tilde \bigO \left(\frac{1}{k} + \frac{1}{k^{\frac{1-\alpha}{\alpha}}}\right)$ that \eqref{eq:helper_sum} can be bounded as
    \begin{align*}
        \sum_{i=1}^{k} \frac{1}{U_{(i)}^{\alpha}} \overset{d}{=} & T_1\left(T_2 + T_3\right) \leq \left(\frac{3}{2}m\right)^\alpha \left(2^\alpha \frac{k^{1-\alpha} - (n_0-1)^{1-\alpha}}{1-\alpha} + k^{1-\alpha}\right) \\ 
        \leq & \left(\frac{3}{2}m\right)^\alpha \cdot k^{1-\alpha} \left( \frac{2^\alpha}{1-\alpha} + 1\right) 
        \leq \left(\frac{3}{2}m\right)^\alpha \cdot k^{1-\alpha} \cdot 2\cdot \frac{2^\alpha}{1-\alpha} \\
        \leq & \frac{2\cdot 3^\alpha}{1-\alpha} \cdot m^{\alpha} k^{1-\alpha}.
    \end{align*}
\end{proof}

\section{Auxiliary lemma}

\begin{lemma}\label{lem:exp_tail}
Suppose $(E_i)_{i\in\NN}\overset{iid}{\sim}\exp(1)$ are standard exponential random variables.
Then there exists some absolute constant $C>0$ such that:
    \begin{enumerate}
        \item For any $n\in\N$ it holds that
        \[
        \Pr\left(\frac{n}{2} \leq \sum_{i=1}^n E_i \leq \frac{3n}{2}\right) \geq 1- 2\exp(-Cn)~.
        \]
        \item For any $n_0\in\N$ it holds that 
        \[
        \Pr\left(\bigcap_{n=n_0}^\infty \left[\frac{n}{2} \leq \sum_{i=1}^n E_i \leq \frac{3n}{2}\right]\right) \geq 1 - 2\left(1+\frac{1}{C}\right)\exp(-Cn_0)~.
        \]
    \end{enumerate}
\end{lemma}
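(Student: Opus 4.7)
The plan is to reduce part (1) to a standard concentration inequality for sub-exponential random variables, and then derive part (2) by a straightforward union bound over $n\ge n_0$, using a geometric tail sum.

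For part (1), note that $E_i - 1$ are i.i.d., mean zero, and sub-exponential with bounded Orlicz norm. I would apply Bernstein's inequality (e.g.\ \citealp[Theorem 2.8.1]{vershynin2018high}): for some absolute $c>0$,
\[
\Pr\!\left(\Bigl|\sum_{i=1}^n E_i - n\Bigr| \ge t\right) \le 2\exp\!\left(-c\min\!\left(\frac{t^2}{n},\, t\right)\right).
\]
Setting $t = n/2$ yields
\[
\Pr\!\left(\Bigl|\textstyle\sum_{i=1}^n E_i - n\Bigr| \ge n/2\right) \le 2\exp(-cn/2),
\]
which is exactly the desired bound with $C := c/2$. (A fully self-contained alternative is to use the explicit MGF $\E[e^{\lambda E_1}] = 1/(1-\lambda)$ for $\lambda<1$, applying a Chernoff bound with $\lambda = 1/3$ for the upper tail and $\lambda = -1$ for the lower tail; this gives an explicit $C$ and avoids invoking Bernstein.) I do not anticipate any real difficulty here; this is a textbook computation.

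For part (2), I would apply a union bound together with part (1):
\[
\Pr\!\left(\bigcup_{n=n_0}^\infty \Bigl\{\textstyle\sum_{i=1}^n E_i \notin [n/2, 3n/2]\Bigr\}\right) \le \sum_{n=n_0}^\infty 2\exp(-Cn) = \frac{2\exp(-Cn_0)}{1-\exp(-C)}.
\]
It remains to verify the clean bound $\tfrac{1}{1-\exp(-C)} \le 1 + \tfrac{1}{C}$. This is equivalent to $(C+1)\exp(-C) \le 1$, which I would check by noting equality at $C=0$ and that the derivative of the left-hand side equals $-C\exp(-C) \le 0$ for all $C\ge 0$. Chaining these estimates then gives the stated bound
\[
\Pr\!\left(\bigcap_{n=n_0}^\infty \Bigl\{\tfrac{n}{2}\le \textstyle\sum_{i=1}^n E_i \le \tfrac{3n}{2}\Bigr\}\right) \ge 1 - 2\!\left(1+\tfrac{1}{C}\right)\!\exp(-Cn_0).
\]

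The only mildly delicate step is the final algebraic inequality $(C+1)e^{-C}\le 1$, since one might naively write the union bound as $\tfrac{2e^{-Cn_0}}{1-e^{-C}}$ and need to match the prefactor claimed in the lemma; the derivative argument above handles this cleanly. Otherwise the proof is purely a concentration bound plus a geometric series, with no combinatorial or probabilistic subtleties.
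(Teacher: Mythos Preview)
Your proposal is correct and follows essentially the same approach as the paper: Bernstein's inequality for sub-exponential variables for part (1), then a union bound over $n\ge n_0$ for part (2). The only minor difference is in bounding the tail sum $\sum_{n\ge n_0} e^{-Cn}$: the paper uses the integral test to get $e^{-Cn_0}+\int_{n_0}^\infty e^{-Cn}\,dn=(1+1/C)e^{-Cn_0}$ directly, whereas you sum the geometric series exactly and then verify $(C+1)e^{-C}\le 1$ --- both are valid and equally elementary.
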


\begin{proof}
    Denote by $\norm{\,\cdot\,}_{\psi_1}$ the sub-exponential norm of a random vector (for a reminder of the definition, see for example \citealt[Definition 2.7.5]{vershynin2018high}). Each $E_i$ satisfies for any $t> 0$, $\Pr\left(E_i \geq t\right) \leq \exp(-t)$ implying that $\norm{E_i}_{\psi_1}=1$. By \citet[Remark 5.18]{vershynin2010introduction}, this implies $\norm{E_i - 1}_{\psi_1}\leq 2$. So Bernstein's inequality for sub exponential random variables \citep[Corollary 2.8.3]{vershynin2018high} states that there exists some absolute constant $C'>0$ such that for any $t\geq 0$
    \[
    \Pr\left(\abs{\left(\frac{1}{n}\sum_{i=1}^nE_i\right) - 1}\geq t\right) \leq 2\exp\left(-C'\min\left(\frac{t^2}{4}, \frac{t}{2}\right)n\right).
    \]
    Taking $t=\frac{1}{2}$ and taking $C:=\frac{C'}{16}$ yields 
    \[
    \Pr\left(\abs{\sum_{i=1}^n E_i - n} \geq \frac{n}{2}\right) \leq 2\exp\left(-Cn\right).
    \]
This proves the first statement. For the second statement, we union bound and apply the integral test for convergence, to get that
    \begin{align*}
        \Pr\left(\bigcup_{n=n_0}^\infty \left[\abs{\sum_{i=1}^n E_i - n} \geq \frac{n}{2}\right]\right) &\leq \sum_{n=n_0}^\infty \Pr\left(\abs{\sum_{i=1}^n E_i - n} \geq \frac{n}{2}\right) \\
        &\leq  2\sum_{n=n_0}^\infty \exp\left(-Cn\right)
        \\
        &\leq 2\exp(-Cn_0) + 2\int_{n_0}^\infty \exp(-Cn_0) \\
        &\leq  2\exp(-Cn_0) + \frac{2}{C}\exp(-Cn_0)~.
    \end{align*}
\end{proof}

\section{Additional experiment} \label{app: another experiment}

In this appendix, we provide an extension of the spherical data experiment discussed in the main paper, demonstrating the effect of noisy sampling on our results.

We repeated the experiment with data sampled from the sphere the sphere $\mathbb{S}^2\subset\reals^3$, given by \eqref{eq: data 2d}.
This time, after sampling from the sphere, we added Gaussian noise distributed as $\Ncal(\bm{0},\sigma^2 I_3)$ to each data point independently, and examined the effect of the noise variance $\sigma^2$ on the exponent $\beta$ achieving minimal test error (corresponding to the intrinsic dimension in our theory).

The results are presented in Figure~\ref{fig:sigmas}, with $m=2000$, $p=0.04$ and various values of $\sigma^2$.

\begin{figure}[H]
    \centering
    \includegraphics[width=0.6\textwidth, clip=true, trim=0 10 0 10]{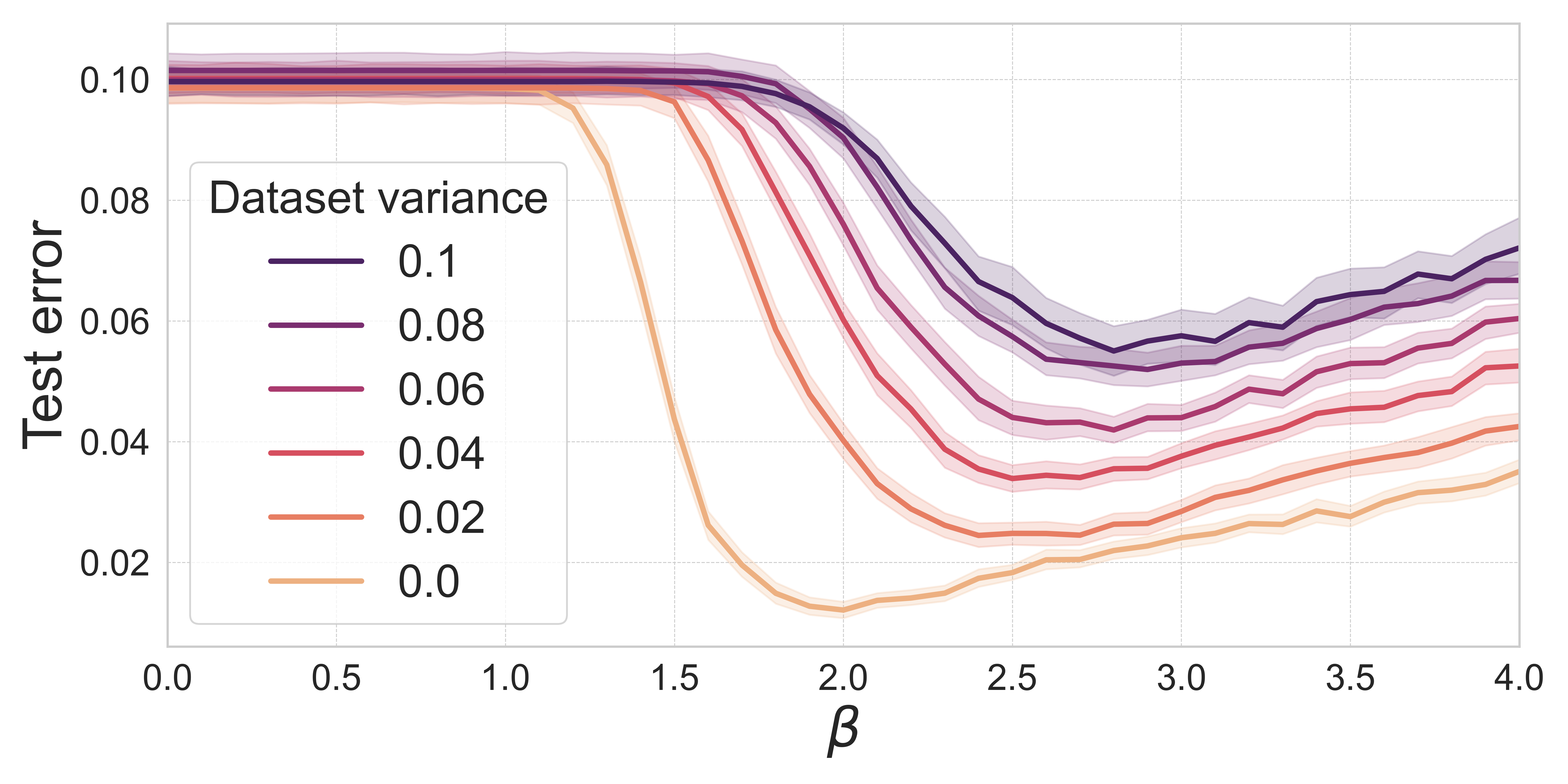}
    \caption{The classification error of $\hat{h}_\beta$ for varying values of $\beta$ and sampling noise $\sigma^2$.
    Best viewed in color.
    }
    \label{fig:sigmas}
\end{figure}

As the noise increases, we see that this “best” $\beta$ \emph{gradually} increases from 2 to 3, as the data indeed becomes fully dimensional for significant noise in the data-points. For example, with variance $0.04$ in each coordinate (equivalently, standard deviation $0.2$), the optimal $\beta$ is roughly $2.5$, which is notably still smaller than $3$.

\end{document}